\definecolor{new_red}{rgb}{.8,0,0}
\definecolor{new_blue}{rgb}{0,0,.8}
\newenvironment{proof}{ \par\noindent{\bfseries\upshape Proof\ }}{$\square$}
\begin{document}

\title{Online Learning of Combinatorial Objects \\$\,$ via Extended Formulation}

\author{\name Holakou Rahmanian \email holakou@ucsc.edu \\
       \addr Department of Computer Science\\
       University of California Santa Cruz\\
       Santa Cruz, CA 95060, USA
       \AND
       \name David P. Helmbold \email dph@ucsc.edu \\
       \addr Department of Computer Science\\
       University of California Santa Cruz\\
       Santa Cruz, CA 95060, USA
       \AND
       \name S.V.N. Vishwanathan \email vishy@ucsc.edu \\
       \addr Department of Computer Science\\
       University of California Santa Cruz\\
       Santa Cruz, CA 95060, USA
}

\editor{No editors}

\maketitle

\begin{abstract}
The standard techniques for online learning of combinatorial objects
perform multiplicative updates followed by projections into the convex hull of all the objects.
However, this methodology can be expensive 
 if the convex hull contains many facets.
For example, the convex hull of $n$-symbol Huffman trees is known to have exponentially many facets \citep{maurras2010convex}. 
We get around this difficulty by exploiting extended formulations \citep{kaibel2011extended}, which encode the polytope of combinatorial objects in a higher dimensional ``extended'' space with only polynomially many facets. We develop a general framework for converting extended formulations into efficient online algorithms with good relative loss bounds. We present applications of our framework to online learning of Huffman trees and permutations.
The regret bounds of the resulting algorithms are within a factor of $\Ocal(\sqrt{\log(n)})$ of the state-of-the-art specialized algorithms for permutations, 
and depending on the loss regimes, improve on or match the state-of-the-art for Huffman trees. 
Our method is general and can be applied to other combinatorial objects. 
\end{abstract}

\begin{keywords}
online learning, extended formulation, combinatorial object, hedge
\end{keywords}

\section{Introduction}
\label{sec:intro}

This paper introduces a general methodology for developing efficient and effective algorithms for learning combinatorial structures.
Examples include learning the best permutation of a set of elements for scheduling or assignment problems, 
or learning the best Huffman tree for compressing sequences of symbols.
Online learning algorithms are being successfully applied to an increasing variety of problems, 
so it is important to have good tools and techniques for creating good algorithms that match the particular problem at hand.

\begin{algorithm} 
\floatname{algorithm}{Prediction Game}
\begin{algorithmic}[1]
\State For each trial $t = 1, \ldots, T$
\State \quad The \textbf{learner} predicts (perhaps randomly) with an object $\hvechat_{t-1}$ in class $\Hcal$.
\State \quad  The \textbf{adversary} reveals a loss vector $\ellvec_t \in [0,1]^n$.
\State \quad The \textbf{learner} incurs a (expected) linear loss $\EE[\hvechat_{t-1} \cdot \ellvec_t]$.
\end{algorithmic}
\caption{Prediction game for the combinatorial class $\Hcal \subset \RR_+^n$.}
\label{alg:game}
\end{algorithm}

The online learning setting proceeds in a series of trials where the algorithm makes a prediction or takes an action associated with an object in the appropriate combinatorial space and then receives the loss of its choice in such a way that the loss of any of the possible combinatorial objects can be easily computed 
(See Prediction Game \ref{alg:game}). 
The algorithm can then update its internal representation based on this feedback and the process moves on to the next trial.
Unlike batch learning settings, there is no assumed distribution from which losses are randomly drawn. Instead the losses are drawn
adversarially.
In general, an adversary can force arbitrarily large loss on the algorithm. So instead of measuring the algorithm's performance by 
the total loss incurred, the algorithm is measured by its \emph{regret}, the amount of loss the algorithm incurs above that of the single
best predictor in some comparator class.  
Usually the comparator class is the class of objects in the combinatorial space being learned.
To make the setting concrete, consider the case of learning Huffman trees for compression\footnote{Huffman trees \citep{cormen2001introduction} are binary trees which construct prefix codes (called Huffman codes) for data compression. The plaintext symbols are located at the leaves of the tree and the path from the root to each leaf defines the prefix code for the associated symbol.
}. 
In each trial, the algorithm would (perhaps randomly) predict a Huffman tree, and then obtain a sequence of symbols to be encoded. 
The loss of the algorithm on that trial is the average bits per symbol
to encode the sequence using the predicted Huffman tree. 
More generally, the loss could be defined as the inner product of any loss vector from the unit cube and the code lengths of the symbols. 
The total loss of the algorithm is the expected average bits per symbol summed over trials.
The regret of the algorithm is the difference between
its total loss and the sum over trials of the
average bits per symbol for the single best Huffman tree chosen in hindsight.
Therefore the regret of the algorithm can be viewed as the cost of not knowing the best combinatorial object ahead of time.
With proper tuning, the regret is typically logarithmic in the number of combinatorial objects.

One way to create algorithms for these combinatorial problems is to use one of the well-known so-called ``experts algorithms'' 
like Randomized Weighted Majority \citep{littlestone1994weighted} or Hedge \citep{freund1997decision} with each combinatorial object is treated as an ``expert''. 
However, this requires explicitly keeping track of one weight for each of the exponentially many combinatorial objects, and
thus results in an inefficient algorithm. Furthermore, it also causes an additional loss range factor in the regret bounds as well.  
There has been much work on creating efficient algorithms that implicitly encode the weights over the set of combinatorial objects using concise representations.
For example, many distributions over the $2^n$ subsets of $n$ elements can be encoded by the probability of including each of the $n$ elements.
In addition to subsets, such work includes permutations \citep{helmbold2009learning, yasutake2011online, ailon2014improved}, paths \citep{takimoto2003path, kuzmin2005optimum}, and $k$-sets \citep{warmuth2008randomized}. 

There are also some general tools for learning combinatorial concepts. \cite{suehiro2012online} introduced efficient online learning algorithms with good regret bounds for structures that can be formulated by submodular functions\footnote{For instance, permutations belong to such classes of structures (see \cite{suehiro2012online}); but Huffman trees do not as the sum of the code lengths of the symbols is not fixed. }.
\textit{Follow the Perturbed Leader (FPL)} \citep{kalai2005efficient} is a simple algorithm which adds random perturbation to the cumulative loss of each component, and then predicts with the combinatorial object with minimum perturbed loss.
The Component Hedge algorithm of \cite{koolen2010hedging} is a powerful generic technique when the implicit encodings are suitably simple.

The Component Hedge algorithm works by performing multiplicative updates on the parameters of its implicit representation.
However, the implicit representation is typically constrained to lie in a convex polytope.
Therefore Bregman projections are used after the update to
return the implicit representation to the desired polytope.
A limitation of Component Hedge is its projection step which is generally only computationally efficient when there are a small (polynomial)
number of constraints on the implicit representations.

The problem of concisely specifying the convex hulls of complicated combinatorial structures (e.g.~permutations and Huffman trees) using few constraints has been well studied in the combinatorial optimization literature. A powerful technique -- namely \textit{extended formulations} -- has been developed to represent these polytopes as a linear projection of a higher-dimensional polyhedron so that the polytope description has far fewer (polynomial instead of exponential)  constraints \citep{kaibel2013constructing, kaibel2011extended, conforti2010extended}.

\paragraph{Contributions:} The main contributions of this paper are:
\begin{enumerate}
\item \underline{The introduction of extended formulation techniques to the machine learning community.} In\\
 particular, the fusion of Component Hedge with extended formulations results in a new methodology for 
designing efficient online algorithms for complex classes of combinatorial objects. 
Our methodology uses a redundant representation for the combinatorial objects where one part of the representation 
allows for a natural loss measure while another enables the simple specification of the class using only polynomially many constraints.
We are unaware of previous online learning work exploiting this kind of redundancy.
To better match the extended formulations to the machine learning applications, we augment the extended formulation with slack variables.  

\item 
\underline{A new and faster prediction technique.} 
Component Hedge applications usually predict by
first re-expressing the algorithm's weight or usage vector as a small convex combination of combinatorial objects, and 
then randomly sample from the convex combination.  
The redundant representation often allows for a more direct and efficient way to generate the algorithm's 
random prediction, bypassing the need to create convex combinations.  
This is always the case for extended formulations based on ``reflection relations'' (as in permutations and Huffman Trees). 
\item  \underline{A new and elegant initialization method.} 
Component Hedge style loss bounds depend on the distance from the initial hypothesis to the best predictor in the class, and a roughly uniform initialization is usually a good choice.  
The initialization of the redundant representation is more delicate.  
Rather than directly picking a feasible initialization, we introduce the idea of first creating an infeasible encoding
with good distance properties, and then projecting it into the feasible polytope. 
This style of implicit initialization improves bounds in some existing work (e.g.~saving a $\log n$ factor in \cite{yasutake2011online}) and has been used to good effect in another domain \citep{rahmanian2017onlinedp}.
\end{enumerate}

\paragraph{Paper Outline:}
Section \ref{sec:related-works} contains an overview of the Component Hedge algorithm
and extended formulations.
Section \ref{sec:method} explains our methodology. 
We then explore the concrete application of our method on Huffman trees and permutation using extended formulations constructed by reflection relations in Section \ref{sec:instantiations}.
Section \ref{sec:fast-prediction} describes our fast prediction technique in the case of using reflection relations.
Finally, Section \ref{sec:conclusion} concludes with contrasting our bounds with those of FPL \citep{kalai2005efficient}, Hedge \citep{freund1997decision} and OnlineRank \citep{ailon2014improved} and describing directions for future work.
The Appendix \ref{app:notations} contains a summary of our notations.

\section{Background}
\label{sec:related-works}
Online learning is a rich and vibrant area, see \cite{cesa2006prediction} for a textbook treatment.
The implicit representations for structured concepts (sometimes called `indirect representations') have been used for a variety 
of problems \citep{helmbold2002direct, helmbold1997predicting, maass1998efficient, takimoto2002predicting, takimoto2003path, yasutake2011online, koolen2010hedging}.
Recall from the Prediction Game \ref{alg:game}, that $t \in \{1..T\}$ is the trial index, $\Hcal$ is the class of combinatorial objects, $\hvechat_t \in \Hcal$ is the algorithm's selected object, and $\ellvec_t$ is the loss vector revealed by the adversary.

\paragraph{Component Hedge:} \cite{koolen2010hedging} developed a generic framework called \textit{Component Hedge} which results in efficient and effective online algorithms over combinatorial objects in $\RR_+^n$ with linear loss. 
Component Hedge maintains a ``usage'' vector $\vvec$ in the polytope $\Vcal$ which is the convex hull of all objects in the combinatorial class $\Hcal$. 
In each trial, the weight of each component (i.e.~coordinate) of $\vvec$ is updated multiplicatively by its associated exponentiated loss $v_i \leftarrow v_i \, e^{- \eta \, \ell_i}$ for all $i \in \{1..n\}$. 

Then the weight vector $\vvec$ is projected back to the polytope $\Vcal$ via relative entropy projection. 
$\Vcal$ is often characterized with a set of equality constraints (i.e.~intersection of affine subspaces).
Iterative Bregman projection \citep{bregman1967relaxation} is often used;  it enforces each constraint in turn.  
Although this can violate previously satisfied constraints, repeatedly cycling through them is 
guaranteed to converge to the proper projection if all the facets of the polytope are equality constraints. 

Finally, to sample with the same expectation as the usage vector, the usage vector is decomposed into corners of the polytope $\Vcal$. Concretely, $\vvec$ is written as a convex combination of some objects in $\Hcal$ using a greedy approach which zeros out at least one component in each iteration.  
%

Component Hedge relies heavily on an efficient characterization of the polytope $\Vcal$ both for projection and decomposition. 
If directly characterizing the polytope $\Vcal$ is either difficult or requires exponentially many facets, Component Hedge cannot be directly applied. In those cases, we show how extended formulations can help with efficiently describing the polytope $\Vcal$.

\paragraph{Extended Formulations:} Many classes of combinatorial objects have polytopes whose discription requires exponentially many facets in the original space (e.g.~see \cite{maurras2010convex}). 
This has triggered the search for more concise descriptions in alternative spaces. 
In recent years, the combinatorial optimization community has given significant attention to
the technique of \textit{extended formulation} where difficult polytopes are represented as a
 linear projection of a higher-dimensional polyhedron \citep{kaibel2013constructing, kaibel2011extended, conforti2010extended}. 
There are many complex combinatorial objects whose associated polyhedra can be described 
as the linear projection of a much simpler, but higher dimensional, polyhedra
(see Figure \ref{fig:ext-form-general}). 

Concretely, assume a polytope $\Vcal \subset \RR_+^n$ is given and described
with exponentially many constraints 
as $\Vcal = \{ \vvec \in \RR^n_+ \mid B \vvec \leq \dvec \}$
%
in the original space $\RR_+^n$ . We assume that using some additional variables $\xvec \in \RR_+^m$, $\Vcal$ can be written efficiently as
\begin{equation}
\Vcal = \{ \vvec \in \RR^n_+ \mid \exists \xvec \in \RR^m_+ : C \vvec + D \xvec \leq \fvec \} \label{eq:V_XF}\footnote{Note that for each $\vvec \in \Vcal$ there exists a $\xvec \in \Xcal$, but it is not necessarily unique.}
\end{equation}
with $r = \text{poly}(n)$ constraints.  Vector $\xvec \in \RR_+^m$ is an e\textbf{x}tended formulation\footnote{Throughout this paper, w.l.o.g., we assume $\xvec$ is in positive quadrant of $\RR^n$, since an arbitrary point in $\RR^n$ can be written as $\xvec = \xvec^+ - \xvec^-$ where $\xvec^+, \xvec^- \in \RR_+^n$.} belonging to the set
\begin{equation}
\Xcal = \{ \xvec \in \RR^m_+ \mid \exists \vvec \in \RR^n_+ : C \vvec + D \xvec \leq \fvec \} \label{eq:X_XF}
\end{equation}
Extended formulations incur the cost of additional variables for the benefit of a simpler (although, higher dimensional) polytope.

 \bigskip
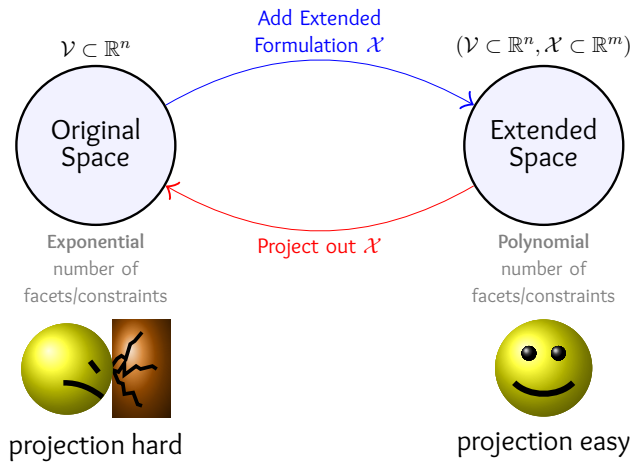
\begin{figure}
\centering
\scalebox{.75}{

\tikzset{
	bigArrow/.style={
		decoration={markings,mark=at position 1 with {\arrow[scale=3]{>}}},
		postaction={decorate},
		shorten >=0.4pt}
	}

\tikzstyle{space}=[draw, circle, very thick, black, text width = 0.8, align=center, fill=blue!5]

{ \overlock 
\begin{tikzpicture}

\node[space, label={${\cal V} \subset \RR^{n}$}, %
	 label={[text width = 1.15in, align=center, gray]below:{\small  \textbf{Exponential} number of facets/constraints} } ] %
	(orig) [text width = 0.9in, align=center]  {\Large Original Space} ;

\node[space, label={ $({\cal V} \subset \RR^n, {\cal X} \subset \RR^m)$ }, %
	label={[text width = 1.15in, align=center, gray]below:{\small  \textbf{Polynomial} number of facets/constraints} } ] %
	(ext) [right = 2.0in of orig, text width = 0.9in, align=center] {\Large Extended Space} ;

\node[below = 0.6in of orig, label=below:{\overlock \Large projection hard}] {\dWalley[6]} ;

\node[below = 0.6in of ext, label=below:{\overlock \Large projection easy}] {\dSmiley[6]};

\draw[bigArrow, bend left, blue] (orig) to node[auto, blue, text width = 1.0in, align=center] 
			{Add Extended Formulation $\Xcal$} (ext);
			
\draw[bigArrow, bend left, red] (ext) to node[auto, red, text width = 1.0in, align=center] 
			{Project out $\Xcal$} (orig);

\end{tikzpicture}

}}
    \caption{Extended formulation.}
    \label{fig:ext-form-general}
    
\end{figure}

\section{The Method}
\label{sec:method}

Here we describe our general methodology for 
using extended formulations to develop new learning algorithms.
Consider a class $\Hcal$ of combinatorial objects and its convex hull $\Vcal$.
We assume there is no efficient description of $\Vcal$ in $\RR_+^n$, but it can be efficiently characterized via an extended formulation $\xvec \in \Xcal$ as in Equations (\ref{eq:V_XF}) and (\ref{eq:X_XF}). 


As described in Section \ref{sec:related-works}, in order to apply Component Hedge (especially the projection), we need to have equality constraints instead of inequality ones. 
Thus, we introduce slack variables $\svec \in \RR_+^{r}$, where $r$ is the number of constraints. Equation (\ref{eq:V_XF}) now becomes
\begin{equation}
\Vcal = \{ \vvec \in \RR^n_+ \mid \exists \xvec \in \RR^m_+, \svec \in \RR_+^{r} : C \vvec + D \xvec + \svec =  \fvec \} \nonumber
\end{equation}
Now, in order to keep track of a usage vector $\vvec \in \Vcal$, we use the following novel representation:
\begin{equation}
\Wcal = \{ \underbrace{(\vvec, \xvec, \svec)}_{\wvec} \in \RR^{n+m+r}_+ \mid  C \vvec + D \xvec + \svec =  \fvec \} \nonumber
\end{equation}
where $\Wcal$ is characterized by $r$ affine constraints. We refer to $\Wcal$ as \textit{the augmented formulation}.
Observe that, despite potential redundancy in representation, all three constituents are useful in this new encoding: $\vvec$ is needed to encode the right loss, $\xvec$ is used for efficient description of the polytope, and $\svec$ is incorporated to have equality constraints. 

\subsection{XF-Hedge Algorithm}
\label{sec:xf-hedge}

Having developed the well-equipped space $\Wcal$, Component Hedge can now be applied.
Since $\vvec$ is the only constituent over which the loss vector $\ellvec^t$ is defined, we work with $\Lvec^t = (\vec{\ell}^t, \zero, \zero) \in [0,1]^{n+m+r}$ in the augmented formulation space $\Wcal$. We introduce a new type of Hedge algorithm combined with e\underline{x}tended \underline{f}ormulation -- \textit{XF-Hedge} (See Algorithm \ref{alg:xf-hedge}). Similar to Component Hedge, XF-Hedge consists of three main steps: \textit{Prediction}, \textit{Update}, and \textit{Projection}.

\begin{algorithm}
\begin{algorithmic}[1]
\State $\wvec^0 = (\vvec^0, \xvec^0, \svec^0) \in \Wcal$ -- a proper prior distribution discussed in \ref{sec:bounds}
\State For $t = 1, \ldots, T$
\State \quad Set $\hvechat^{t-1} \gets$ \textbf{Prediction}$(\wvec^{t-1})$ where $\hvechat^{t-1} \in \Hcal$ is a random object s.t. $\EE \sbr{\hvechat^{t-1}} = \vvec^{t-1}$
\State \quad Incur a loss $\hvechat^{t-1} \cdot \vec{\ell}^t$
\State \quad \textbf{Update}:
\State \qquad Set $\vtil^{t-1}_i \gets v^{t-1}_i \, e^{- \eta \, \ell^t_i}$ for all $i \in [n]$
\State \quad Set $\wvec^{t} \gets$ \textbf{ Projection}$\underbrace{(\vvectil^{t-1}, \xvec^{t-1}, \svec^{t-1})}_{\wvectil^{t-1}} $ where $\wvec^{t} = \underset{\wvec \in \Wcal}{\arg\min} \; \Delta\left(\wvec || \wvectil^{t-1}\right)$
\end{algorithmic}
\caption{XF-Hedge}
\label{alg:xf-hedge}
\end{algorithm}

\paragraph{Prediction:} Randomly select an object $\hvechat^{t-1}$ from the combinatorial class $\Hcal$ in such a way that $\EE \sbr{\hvechat^{t-1}} = \vvec^{t-1}$. The details of this step depend on the combinatorial class $\Hcal$ and the extended formulation used for $\Wcal$. In Component Hedge and similar algorithms \citep{helmbold2009learning, koolen2010hedging, yasutake2011online, warmuth2008randomized}, this step is usually done by decomposing\footnote{Note that according to Caratheodory's theorem, such decomposition exists in $\Wcal$ using at most $n+m+r+1$ objects (i.e.~corners of the polytope $\Wcal$).} $\vvec^{t-1}$ into a convex combination of objects in $\Hcal$. In Section \ref{sec:fast-prediction}, we present a faster prediction method for combinatorial classes $\Hcal$ whose extended formulation is constructed by reflection relations.

\paragraph{Update:} Having defined $\Lvec^t = (\vec{\ell}^t, \zero, \zero)$, the updated $\wvectil^{t-1}$ is obtained using a trade-off between the linear loss and the unnormalized relative entropy \citep{koolen2010hedging}:
\begin{displaymath}
\wvectil^{t-1} 
= \underset{\wvec \in \RR^{r}}{\arg\min} \; \Delta(\wvec || \wvec^{t-1}) 
+ \eta \, \wvec \cdot \Lvec^t, \quad \text{ where } \quad \Delta(\avec || \bvec) = \sum_i a_i \log \frac{a_i}{b_i} + b_i - a_i
\end{displaymath}
Using Lagrange multipliers, it is fairly straight-forward to see that only the $\vvec$ components of $\wvec^{t-1}$ are updated:

\[ 
\forall i \in \{1..n\}, \; \vtil^{t-1}_i = v^{t-1}_i \, e^{- \eta \, \ell_i^t} ; 
\qquad \xvectil^{t-1}=\xvec^{t-1} ;
\qquad \svectil^{t-1} = \svec.
\]


\paragraph{Projection:} 
We use an unnormalized relative entropy Bregman projection 
to project $\wvectil^{t-1}$ back into $\Wcal$ obtaining 
the new $\wvec^t$ for the next trial.
\begin{equation}
\wvec^{t} 
= \underset{\wvec \in \Wcal}{\arg\min} \; \Delta(\wvec || \wvectil^{t-1})  \label{eq:proj}
\end{equation}

Let $\Psi_0, \ldots, \Psi_{r-1}$ be the $r$ hyperplanes where 
the $r$ constraints of $C \vvec + D \xvec + \svec = \fvec$
are satisfied, i.e.~$\wvec \in \Psi_k$ if and only if $\wvec$ satisfies the $k$th constraint.  
Then $\Wcal$ is the intersection of the $\Psi_k$'s.  
Since the non-negativity constraints are already enforced by the definition of $\Delta(\cdot || \cdot)$, 
it is possible to solve (\ref{eq:proj}) using iterative Bregman projections\footnote{In \cite{helmbold2009learning} Sinkhorn balancing is used for projection which is also a special case of iterative Bregman projection.} \citep{bregman1967relaxation}. Starting from $\pvec_{0} = \wvectil^{t-1}$, we iteratively compute:
\begin{equation}
\pvec_{k} = 
\underset{\pvec \in \Psi_{(k \bmod r)}}{\arg\min} \; \Delta(\pvec || \pvec_{k-1}) \nonumber
\end{equation}
%
%
repeatedly cycling through the constraints.
In Appendix \ref{sec:proj-to-constraint}, we discuss how one can efficiently project onto each hyperplane $\Psi_k$ for all $k \in \{1..r\}$. 
It is known that $\pvec_{k}$ converges in norm to the unique solution of (\ref{eq:proj}) \citep{bregman1967relaxation, bauschke1997legendre}.

\subsection{Regret Bounds}
\label{sec:bounds}
Similar to Component Hedge, the general regret bound depends on the initial weight vector $\wvec^0 \in \Wcal$ via $\Delta( \wvec(\hvec) || \wvec^0)$ where $\wvec(\hvec) \in \Wcal$ is the augmented formulation of the object $\hvec \in \Hcal$ against which the algorithm is compared (the best $\hvec$ for the adversarially chosen sequence of losses).
\begin{lemma}
\label{lemma:ch-bounds}
Let $L^* := \underset{\hvec \in \Hcal}{\min} \sum_{t=1}^T \hvec \cdot \vec{\ell}^t$. By proper tuning of the learning rate $\eta$:
\begin{align*}
\EE \sbr{\sum_{t=1}^T \hvechat^{t-1} \cdot \vec{\ell}^t } 
- \underset{\hvec \in \Hcal}{\min} \sum_{t=1}^T \hvec \cdot \vec{\ell}^t
\leq
\sqrt{2 L^* \, \Delta(\wvec(\hvec) || \wvec^0)  } + \Delta(\wvec(\hvec) || \wvec^0)
\end{align*}
\end{lemma}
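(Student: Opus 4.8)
The plan is to follow the standard Component Hedge analysis of \cite{koolen2010hedging}, adapting it to the augmented space $\Wcal$ and exploiting that the loss $\Lvec^t$ acts only on the $\vvec$-coordinates. Write $\wvec^* := \wvec(\hvec) = (\hvec, \xvec, \svec) \in \Wcal$ for the comparator; since the objects of $\Hcal$ live in the $\vvec$-block, its $\vvec$-component is exactly $\hvec$, so $\wvec^* \cdot \Lvec^t = \hvec \cdot \vec{\ell}^t$. The prediction guarantee $\EE[\hvechat^{t-1}] = \vvec^{t-1}$ lets me replace the expected per-trial loss $\EE[\hvechat^{t-1} \cdot \vec{\ell}^t]$ by $\vvec^{t-1} \cdot \vec{\ell}^t = \wvec^{t-1} \cdot \Lvec^t$, reducing the whole argument to a deterministic statement about the sequence $\wvec^0, \wvec^1, \ldots$.

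The core is a per-trial progress inequality bounding the loss by the drop in Bregman divergence to $\wvec^*$, which I split into the update and the projection. For the update, since only the $\vvec$-block changes and $\vtil^{t-1}_i = v^{t-1}_i e^{-\eta \ell^t_i}$, the $\xvec$- and $\svec$-contributions to $\Delta(\wvec^* || \cdot)$ cancel and a direct computation gives
\begin{equation}
\Delta(\wvec^* || \wvectil^{t-1}) - \Delta(\wvec^* || \wvec^{t-1}) = \eta\, \hvec \cdot \vec{\ell}^t + \sum_{i=1}^n v^{t-1}_i\left(e^{-\eta \ell^t_i} - 1\right). \nonumber
\end{equation}
Applying the convexity bound $e^{-\eta x} - 1 \le -(1 - e^{-\eta})x$, valid for $x \in [0,1]$, to the second sum turns it into $-(1 - e^{-\eta})\, \vvec^{t-1} \cdot \vec{\ell}^t$. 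For the projection, $\wvec^t$ is the relative-entropy projection of $\wvectil^{t-1}$ onto the convex set $\Wcal$, so the generalized Pythagorean inequality for Bregman divergences \citep{bregman1967relaxation, bauschke1997legendre} yields $\Delta(\wvec^* || \wvec^t) \le \Delta(\wvec^* || \wvectil^{t-1})$ for every $\wvec^* \in \Wcal$. Chaining the two gives the per-trial inequality
\begin{equation}
(1 - e^{-\eta})\, \vvec^{t-1} \cdot \vec{\ell}^t \le \eta\, \hvec \cdot \vec{\ell}^t + \Delta(\wvec^* || \wvec^{t-1}) - \Delta(\wvec^* || \wvec^t). \nonumber
\end{equation}

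Summing over $t = 1, \ldots, T$ telescopes the divergence terms; dropping the nonnegative $\Delta(\wvec^* || \wvec^T)$ and taking $\hvec$ to be the minimizer so that $\sum_t \hvec \cdot \vec{\ell}^t = L^*$ gives $(1 - e^{-\eta}) \sum_t \vvec^{t-1} \cdot \vec{\ell}^t \le \eta L^* + \Delta(\wvec^* || \wvec^0)$. Restoring $\EE[\sum_t \hvechat^{t-1} \cdot \vec{\ell}^t]$ on the left via the prediction identity and rearranging produces $\EE[\sum_t \hvechat^{t-1} \cdot \vec{\ell}^t] \le (\eta L^* + \Delta(\wvec(\hvec) || \wvec^0))/(1 - e^{-\eta})$. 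The claimed bound then follows from the standard Hedge tuning of $\eta$ (e.g.\ $\eta = \log(1 + \sqrt{2 \Delta(\wvec(\hvec)||\wvec^0)/L^*}\,)$), which, using $\eta - 1 + e^{-\eta} \le \eta^2/2$, converts this ratio into $L^* + \sqrt{2 L^* \Delta(\wvec(\hvec)||\wvec^0)} + \Delta(\wvec(\hvec)||\wvec^0)$.

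I expect the main subtlety to be the projection step rather than the algebra. The Pythagorean inequality requires both that the comparator $\wvec(\hvec)$ genuinely lies in $\Wcal$ and that $\wvec^t$ is the \emph{exact} Bregman projection; since the algorithm computes this projection only by iteratively cycling through the hyperplanes $\Psi_0, \ldots, \Psi_{r-1}$, I must invoke the convergence of iterative Bregman projection to the true projection onto the intersection \citep{bregman1967relaxation, bauschke1997legendre} to justify it. A secondary point to verify is that the update leaves the $\xvec$- and $\svec$-blocks untouched, which is what makes their divergence contributions cancel and lets the entire loss be charged to the $\vvec$-block; this was already established when deriving the update via Lagrange multipliers.
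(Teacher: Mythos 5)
Your proposal is correct and takes essentially the same route as the paper's proof: the identical per-trial inequality obtained from the multiplicative update via the convexity bound $1-e^{-\eta x}\ge(1-e^{-\eta})x$ on $[0,1]$ (you state it in the rearranged form $e^{-\eta x}-1\le-(1-e^{-\eta})x$), the Generalized Pythagorean Theorem applied to the exact Bregman projection onto the convex set $\Wcal$ containing $\wvec(\hvec)$, telescoping with the nonnegativity of $\Delta(\wvec(\hvec)\,||\,\wvec^T)$, and the standard tuning of $\eta$ for which the paper simply cites Lemma 4 of \cite{freund1997decision}. Your framing as a divergence-drop computation is algebraically identical to the paper's chain of inequalities, and your closing remarks (exactness of the projection versus its iterative computation, and the cancellation of the $\xvec$- and $\svec$-blocks in the update) correctly match what the paper handles, respectively, in its approximate-projection analysis and its Lagrange-multiplier derivation of the update.
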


The proof uses standard techniques from the online learning literature (see, e.g.~, \citep{koolen2010hedging}) and is given in Appendix \ref{sec:proof:lemma:ch-bounds}. 
In order to get good bounds, the initial weight $\wvec^0$ must be ``close'' to all corners $\hvec$ of the polytope, and thus in the``middle" of $\Wcal$.
In previous works \citep{koolen2010hedging, yasutake2011online, helmbold2009learning}, the initial weight is explicitly chosen and it is often set to be the uniform usage of the objects. 
This explicit initialization approach may be difficult to perform when the polytope has a complex structure.

Here, instead of explicitly selecting $\wvec^0 \in \Wcal$, we implicitly design the initial point. 
First, 
we find an intermediate ``middle'' point $\wvectil \in \RR^{n+m+r}$ with good distance properties,
and then project $\wvectil$ into $\Wcal$ to obtain the initial $\wvec^0$ for the first trial.

A good choice for $\wvectil$ is $U \, \one$ where $\one \in \RR^{n+m+r}$ is the vector of all ones, and $U \in \RR_+$ is an upper-bound on the infinity norms of the corners of polytope $\Wcal$. 
This leads to the nice bound $\Delta(\wvec(\hvec) || \wvectil) \leq (n+m+r) U$ 
for all objects $\hvec \in \Hcal$.  
The Generalized Pythagorean Theorem \citep{herbster2001tracking}
ensures that the same bound holds for $\wvec^0$ 
(see Appendix \ref{sec:proof:lemma:xf-init} for the details).


\begin{lemma}
\label{lemma:xf-init}
Assume that there exists $U \in \RR_+$ such that $\| \wvec(\hvec) \|_\infty \leq U$ for all $\hvec \in \Hcal$. Then the initialization method finds a $\wvec^0 \in \Wcal$ such that for all $\hvec \in \Hcal$,  
$\Delta(\wvec(\hvec) || \wvec^0) \leq (n+m+r) \, U$.
\end{lemma}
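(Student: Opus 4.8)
The plan is to establish the bound in two stages, mirroring the two-stage initialization described above: first bound the divergence from each corner $\wvec(\hvec)$ to the (generally infeasible) intermediate point $\wvectil = U\,\one$, and then argue that the Bregman projection onto $\Wcal$ can only decrease this divergence.

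For the first stage, I would expand the unnormalized relative entropy coordinatewise. Writing $a_i$ for the $i$th coordinate of $\wvec(\hvec)$ and using $\wvectil = U\,\one$, we have
\begin{equation}
\Delta(\wvec(\hvec) \,||\, \wvectil) = \sum_{i=1}^{n+m+r} \left( a_i \log \frac{a_i}{U} + U - a_i \right) . \nonumber
\end{equation}
Since $\wvec(\hvec) \in \Wcal \subset \RR_+^{n+m+r}$ and $\| \wvec(\hvec) \|_\infty \leq U$ by hypothesis, each $a_i$ lies in $[0,U]$. I would then analyze the scalar function $f(a) = a \log(a/U) + U - a$ on $[0,U]$: its derivative is $f'(a) = \log(a/U) \leq 0$, so $f$ is nonincreasing and attains its maximum at $a=0$, where $f(0) = U$ (using $a\log a \to 0$). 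Hence every summand is at most $U$, and summing over the $n+m+r$ coordinates gives $\Delta(\wvec(\hvec) \,||\, \wvectil) \leq (n+m+r)\,U$.

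For the second stage, recall that $\wvec^0 = \arg\min_{\wvec \in \Wcal} \Delta(\wvec \,||\, \wvectil)$ is the relative-entropy Bregman projection of $\wvectil$ onto the convex polytope $\Wcal$. Because $\wvec(\hvec)$ is itself a point of the convex set $\Wcal$, the Generalized Pythagorean Theorem for Bregman divergences \citep{herbster2001tracking} yields
\begin{equation}
\Delta(\wvec(\hvec) \,||\, \wvectil) \geq \Delta(\wvec(\hvec) \,||\, \wvec^0) + \Delta(\wvec^0 \,||\, \wvectil) . \nonumber
\end{equation}
Dropping the nonnegative term $\Delta(\wvec^0 \,||\, \wvectil)$ and chaining with the first-stage bound gives $\Delta(\wvec(\hvec) \,||\, \wvec^0) \leq \Delta(\wvec(\hvec) \,||\, \wvectil) \leq (n+m+r)\,U$ for every $\hvec \in \Hcal$, which is the claim.

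The scalar estimate is routine; the only points requiring care are the hypotheses needed to invoke the Pythagorean inequality — namely that $\Wcal$ is convex (it is, being the intersection of the affine equality constraints with the nonnegative orthant) and that the comparator $\wvec(\hvec)$ genuinely lies in $\Wcal$ (it does, as the augmented formulation of $\hvec$). I expect the main subtlety to be bookkeeping around the fact that $\wvectil$ is deliberately infeasible, so the projection step is doing real work; once convexity and feasibility of $\wvec(\hvec)$ are in place, the inequality transfers the midpoint bound to $\wvec^0$ with no loss.
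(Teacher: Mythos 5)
Your proof is correct and follows essentially the same route as the paper's: bound $\Delta(\wvec(\hvec)\,||\,U\one)$ coordinatewise by $U$ and then transfer the bound to the projection $\wvec^0$ via the Generalized Pythagorean Theorem, exactly as in Appendix C. Your scalar analysis of $f(a) = a\log(a/U) + U - a$ is just a more detailed justification of the paper's one-line estimate, and your stated three-point Pythagorean inequality is the standard strengthening of the inequality the paper invokes.
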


Combining Lemmas \ref{lemma:ch-bounds}, and \ref{lemma:xf-init} gives the following guarantee.

\begin{theorem}
\label{thm:bounds}
If each $\ellvec^t \in [0,1]^n$ and 
$\| \wvec(\hvec) \|_\infty \leq U$ for all $\hvec \in \Hcal$, then XF-hedge's regret is:
\begin{align*}
\EE \sbr{\sum_{t=1}^T \hvechat^{t-1} \cdot \vec{\ell}^t } 
- \underset{\hvec \in \Hcal}{\min} \sum_{t=1}^T \hvec \cdot \vec{\ell}^t
\leq
\sqrt{2 L^* \,(n+m+r) \, U  } + (n+m+r) \, U
\end{align*}
\end{theorem}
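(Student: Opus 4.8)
The plan is to derive Theorem \ref{thm:bounds} directly by chaining the two preceding lemmas, exploiting the fact that the right-hand side of Lemma \ref{lemma:ch-bounds} is monotone in the divergence term. First I would fix the comparator $\hvec^\star \in \Hcal$ to be the object attaining $L^* = \min_{\hvec \in \Hcal} \sum_{t=1}^T \hvec \cdot \vec{\ell}^t$, so that the regret we must bound is exactly the left-hand side common to both Lemma \ref{lemma:ch-bounds} and the theorem. Applying Lemma \ref{lemma:ch-bounds} to this $\hvec^\star$ then yields the intermediate bound $\sqrt{2 L^* \, \Delta(\wvec(\hvec^\star) || \wvec^0)} + \Delta(\wvec(\hvec^\star) || \wvec^0)$.

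The key step is to replace $\Delta(\wvec(\hvec^\star) || \wvec^0)$ by its comparator-independent upper bound. Since $\| \wvec(\hvec) \|_\infty \leq U$ for all $\hvec \in \Hcal$ by hypothesis, Lemma \ref{lemma:xf-init} guarantees that the initialization produces a $\wvec^0 \in \Wcal$ with $\Delta(\wvec(\hvec) || \wvec^0) \leq (n+m+r) U$ for every $\hvec \in \Hcal$, and in particular for $\hvec^\star$. Because both $x \mapsto \sqrt{2 L^* x}$ and $x \mapsto x$ are nondecreasing for $x \geq 0$ (and $L^* \geq 0$ since each $\hvec \cdot \vec{\ell}^t \geq 0$ for $\hvec \in \RR_+^n$ and $\vec{\ell}^t \in [0,1]^n$), substituting the larger value $(n+m+r) U$ for $\Delta(\wvec(\hvec^\star) || \wvec^0)$ can only increase the right-hand side, giving the claimed bound $\sqrt{2 L^* \, (n+m+r) U} + (n+m+r) U$.

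The only subtlety I would take care to address is the tuning of $\eta$. The comparator $\hvec^\star$ is selected in hindsight by the adversary, so the learning rate cannot legitimately be set as a function of $\Delta(\wvec(\hvec^\star) || \wvec^0)$ itself. Instead I would tune $\eta$ using the quantity $(n+m+r) U$, which Lemma \ref{lemma:xf-init} certifies as a uniform upper bound on $\Delta(\wvec(\hvec) || \wvec^0)$ over all of $\Hcal$ and is therefore available a priori. This is precisely what the ``proper tuning'' clause of Lemma \ref{lemma:ch-bounds} permits: the standard choice $\eta = \log(1 + \sqrt{2 (n+m+r) U / L^*})$ yields the stated closed form. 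I expect no genuine obstacle; the argument is a monotone substitution together with the observation that the tuning quantity does not require knowledge of the comparator.
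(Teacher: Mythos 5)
Your proposal is correct and matches the paper's own argument, which likewise obtains Theorem \ref{thm:bounds} by combining Lemma \ref{lemma:ch-bounds} with the uniform divergence bound $\Delta(\wvec(\hvec) || \wvec^0) \leq (n+m+r)\,U$ from Lemma \ref{lemma:xf-init}. Your extra care about tuning $\eta$ via the comparator-independent quantity $(n+m+r)\,U$ is consistent with the paper's appeal to the standard tuning of \cite{freund1997decision} inside the proof of Lemma \ref{lemma:ch-bounds}.
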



\section{XF-Hedge Examples Using Reflection Relations}
\label{sec:instantiations}
One technique for constructing extended formulations is called \emph{reflection relations} \citep{kaibel2013constructing}, and this technique can be used to efficiently describe the polytopes of permutations and Huffman trees. 
Here we describe how reflection relations can be used with the XF-Hedge framework to create concrete learning algorithms for permutations and Huffman trees.


As in \cite{yasutake2011online} and \cite{ailon2014improved}, we consider losses that are linear in the first order representation of the objects \citep{diaconis1988group}. 
For permutations of  $n$ items, the first order representation is vectors $\vvec \in \RR^n$ where each of the elements of $\{1,2,\ldots, n \}$ appears exactly once\footnote{In contrast, \cite{helmbold2009learning} work with the second order representation (i.e.~Birkhoff polytope), and consequently losses, which is a more general loss family (see \cite{yasutake2011online} for comparison).} 
and for Huffman trees on $n$ symbols, the first order representation is vectors $\vvec \in \RR^n$ where each $v_i$ is an integer indicating the depth of the leaf corresponding to symbol $i$ in the coding tree. 
At each trial the loss is $\vvec \cdot \ellvec$ where the adversary's
$\ellvec$ is a loss vector in the unit cube $[0,1]^n$.
This type of loss is sufficiently rich to capture well-known natural losses like \textit{average code length} 
for Huffman trees (when $\ell$ is the symbol frequencies)
and \textit{sum of completion times}  for permutations\footnote{To easily encode the sum of completion times, the
predicted permutation represents the \emph{reverse} order in which the tasks are to be executed.}
(when $\ell$ is the task completion times).


\paragraph{Constructing Extended Formulations from Reflection Relations}

\cite{kaibel2013constructing} show how to construct polynomial size extended formulations
using a canonical corner of the polytope and a fixed sequence of hyperplanes.
These have the property that any corner of the desired polytope can be generated by reflecting the canonical corner through a subsequence of the hyperplanes. 
These reflections are \textit{one-sided} in the sense that they map the half-space containing the canonical corner to the other half-space.
For example, the corners of Figure \ref{fig:reflection-relation} (Left) can be generated in this way.
Of course the hard part is to find a good sequence of hyperplanes with this property.


A key idea for generating the entire polytope 
is to allow ``partial reflections''  where the point to be reflected can not just be kept (skipping the reflection)
or replaced by its reflected image, but mapped to any point on the line segment joining the point and its reflected image as illustrated in Figure \ref{fig:reflection-relation} (Right). 
Since any point in the convex hull of the polytope can be constructed by at least one sequence of partial one-sided reflections, 
every point in the polytope has an alternative representation in terms of how much each reflection was used 
to generate it from the canonical corner (see Figure \ref{fig:reflection-relation}).
Each of these parameterized partial reflections is a \emph{reflection relation},


For each reflection relation, there will be one additional variable indicating the extent to which the reflection occurs, and two additional inequalities for the extreme cases of complete reflection and no reflection.
Therefore, if the polytope can be expressed with polynomially many reflection relations, then it has an extended formulation of polynomial size with polynomially many constraints. 
Appendix \ref{sec:inductive-ext-form} provides more details about the type of results shown by \cite{kaibel2013constructing}.

\def\x{5}
\def\y{5}
\def\k{1.7}
\def\e{.5}
\def\ee{0}
\def\f{.6}
\def\kk{1.5}
\def\xx{3.5}
\def\a{.9}
\def\b{3}
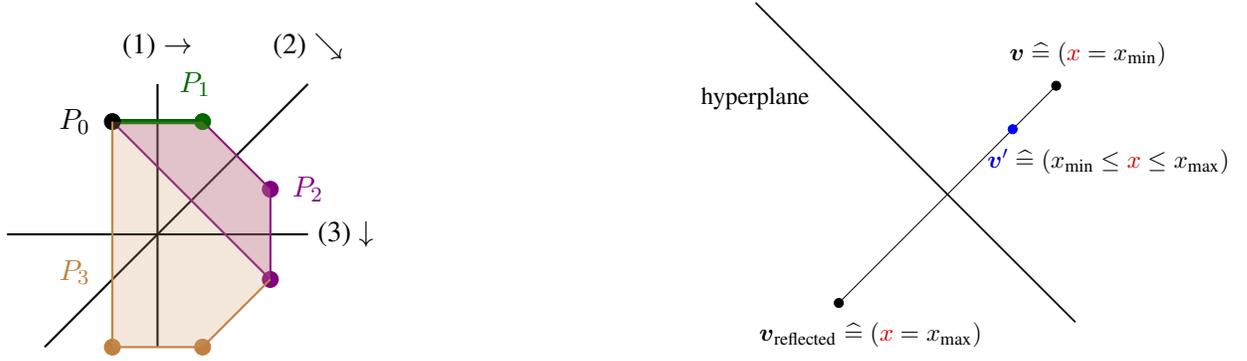
\begin{figure}[!tbp]
  \centering
  \begin{minipage}[b]{0.4\textwidth}
  \scalebox{1.0}{
    \begin{tikzpicture}
    \draw[thick] (\kk,0) -- (\kk,\xx);
    \node[] at (\kk,\xx + \e) {(1) $\rightarrow$};

    \draw[thick] (0,0) -- (\xx,\xx);
    \node[] at (\xx,\xx + \e) {(2) $\searrow$};
    
    \draw[thick] (2*\kk-\b-\e,\kk+\ee) -- (\xx,\kk+\ee);
    \node[] at (\xx+\e,\kk+\ee) {(3) $\downarrow$};

    \filldraw[violet] (\b, 2*\kk-\a) circle (3pt);     
    \filldraw[violet] (\b,\a) circle (3pt);
    
    \draw[thick,violet] (\a,\b) -- (\b,\a);
    \draw[thick,violet] (2*\kk-\a, \b) -- (\b, 2*\kk-\a);
    \draw[thick,violet] (\b,\a) -- (\b, 2*\kk-\a);

    \node[] at (\b + \e, 2*\kk-\a) {\textcolor{violet}{$P_2$}};
  
  \fill[violet!75,nearly transparent] (\a,\b) -- (2*\kk-\a,\b) -- (\b, 2*\kk-\a) -- (\b,\a) -- cycle;
    \filldraw[black!60!green] (2*\kk-\a,\b) circle (3pt); 
    \draw[line width=0.7mm,black!60!green] (\a,\b) -- (2*\kk-\a,\b);
    \node[] at (\kk + \e, \b + \e) {\textcolor{black!60!green}{$P_1$}};

    \filldraw (\a,\b) circle (3pt);
    \node[] at (\a - \e, \b) {$P_0$};
    
   \filldraw[brown] (2*\kk-\a, 2*\kk+2*\ee-\b) circle (3pt);     
   \filldraw[brown] (\a, 2*\kk+2*\ee-\b) circle (3pt);     

   \draw[thick,brown] (\b, \a) -- (2*\kk-\a, 2*\kk+2*\ee-\b);
   \draw[thick,brown] (2*\kk-\a, 2*\kk+2*\ee-\b) -- (\a, 2*\kk+2*\ee-\b);
   \draw[thick,brown] (\a, 2*\kk+2*\ee-\b) -- (\a,\b);
   
   \fill[brown!75,nearly transparent] (\a,\b) -- (2*\kk-\a,\b) -- (\b, 2*\kk-\a) -- (\b,\a) -- (2*\kk-\a, 2*\kk+2*\ee-\b) -- (\a, 2*\kk+2*\ee-\b)  -- cycle;
   
   \node[] at (\a-\e, \kk-\e) {\textcolor{brown}{$P_3$}};

%
%
%

\end{tikzpicture} }   
  \end{minipage}
\hfill
  \begin{minipage}[b]{0.4\textwidth}
    \scalebox{.85}{

    \begin{tikzpicture}
    
    \draw[thick] (0,\y) -- (\x,\y-\x);
    \node[] at (0,\y-3*\e) {hyperplane};

    \draw (\x * \f -\k, \y - \x *\f - \k) -- (\x * \f+\k, \y-\x * \f+\k);
    
    \filldraw (\x* \f+\k, \y-\x* \f+\k) circle (2pt);
    \node[] at (\x* \f+\k+\e, \y-\x* \f+\k+\e) {$\vvec \; \widehat{=} \; (\textcolor{new_red}{x} = x_\text{min})$};
    
    \filldraw (\x* \f-\k, \y - \x* \f - \k) circle (2pt);
    \node[] at (\x* \f-\k+\e, \y - \x* \f - \k -\e) {$\vvec_\text{reflected} \; \widehat{=} \;  (\textcolor{new_red}{x} = x_\text{max})$};
    
    \filldraw[blue] (\x* \f+\k* \f, \y-\x* \f+\k* \f) circle (2pt);
    \node[] at (\x* \f+\k* \f +3*\e, \y-\x* \f+\k* \f-\e) {$\textcolor{new_blue}{\vvec'} \; \widehat{=} \;  (x_\text{min} \leq \textcolor{new_red}{x} \leq x_\text{max})$};

\end{tikzpicture}
}
  \end{minipage}
\caption{(Left) The 6 corners of the polytope are generated by 
subsequences of one-sided reflections through lines (1), (2), and (3),
starting from the canonical point $P_0$. 
Using partial reflections, we can generate the entire polytope.
(Right) A partial reflection of $\vvec$ to $\textcolor{new_blue}{\vvec'}$ corresponds to ($\widehat{=}$) a variable $\textcolor{new_red}{x}$ indicating how far $\textcolor{new_blue}{\vvec'}$ moves towards $\vvec$'s image $\vvec_\text{reflected}$. }
\label{fig:reflection-relation}
\end{figure}



\paragraph{Extended Formulations for Objects Closed under Re-Ordering}

Assume we want to construct an extended formulation for a class of combinatorial objects which is closed under any re-ordering (both Huffman trees and  permutations both have this property).
Then reflection relations corresponding to swapping pairs of elements are useful.
Swapping elements $i$ and $j$ can be implemented with a hyperplane going through the origin 
and having normal vector $\evec_i - \evec_j$ (here $\evec_i$ is the $i$th unit vector). 
The identity permutation is the natural canonical corner, 
so the one-sided reflections are only used for $\vvec$ where $v_i \leq v_j$.

Implementing the reflection relation for the $i$, $j$ swap  
uses an additional variable along with two additional inequalities. 
Concretely, assume $\vvec \in \RR^n$ is going into this reflection relation and $\vvec' \in \RR^n$ is the output, 
so $\vvec'$ is in the convex combination of $\vvec$ and its reflection.  
It is natural to encode this as $\vvec' = \gamma \vvec + (1-\gamma) \vvec_\text{reflected}$.
However, we found it more convenient to parameterize $\vvec'$ by its absolute distance $x$ from $\vvec$, rather than the relative distance $\gamma \in [0,1]$.
Using this parameterization, we have
$\vvec' = \vvec + x \, (\evec_i - \evec_j)$ 
constrained by $(\evec_i - \evec_j) \cdot \vvec \leq (\evec_i - \evec_j) \cdot \vvec' \leq  - (\evec_i - \evec_j) \cdot \vvec$. 
Therefore the possible relationships between between $\vvec'$ and $\vvec$ 
can be encoded with the additional variable $x$ 
and the following constraints:\footnote{In general $\vvec$ (and thus $v_j$ and $v_i$) may be functions of the variables for previous reflection relations.}
\begin{equation}
\label{eq:single-reflection-formulation}
\textcolor{new_blue}{\vvec'}  = \mvec \, \textcolor{new_red}{x} + \vvec \quad \text{where }
\mvec = \evec_i - \evec_j,
\qquad 0 \leq \textcolor{new_red}{x} \leq v_j - v_i .
\end{equation}


Notice that $x$ indicates the amount of change in the $i$th and $j$th elements which can go from zero (remaining unchanged) to the maximum swap capacity $v_j - v_i$. 

Suppose the desired polytope is described using $m$ reflection relations and with canonical point $\cvec$.
Then starting from $\cvec$ and successively applying the equation in (\ref{eq:single-reflection-formulation}), 
we obtain the connection between the extended formulation space $\Xcal$ and original space $\Vcal$:
\begin{displaymath}
\textcolor{new_blue}{\vvec} = M \, \textcolor{new_red}{\xvec} + \cvec, \quad \textcolor{new_blue}{\vvec}, \cvec \in \Vcal \subset \RR^n, \; \textcolor{new_red}{\xvec} \in \Xcal \subset \RR^m, \; M \in \{-1,0,1\}^{n \times m}.
\end{displaymath}



\cite{kaibel2013constructing} showed that the $m$ reflection relations corresponding to the $m$ comparators in an arbitrary $n$-input 
sorting network\footnote{A \textit{sorting network} is a sorting algorithm where the comparisons are fixed in advance. See e.g.~\citep{cormen2001introduction}.}
 generates the permutation polytope (see Figure \ref{fig:ext-form-ex}).
A similar extended formulation for Huffman trees can be built using an arbitrary sorting network along with $O(n \log n)$ additional comparators and simple linear maps (which do not require extra variables) and the canonical corner $\cvec = [1,2, \ldots, n-2, n-1, n-1]^T$ (see Section 2.24 in \cite{pashkovich2012extended} for more details).
%
%
%
Note that the reflection relations are applied in reverse order than their use in the sorting network 
(see Figure \ref{fig:ext-form-ex}).

\def\h{.7}
\begin{figure}
    \centering
    \begin{tikzpicture}
\foreach \l in {11.5}{
\foreach \i in {0,5,\l}{
  \node[] at (\i+2, 2.7+\h) {$\textcolor[rgb]{0,.6,0}{\xrightarrow{\text{Sorting Network}}}$};
  \node[] at (\i+2, 0) {$\textcolor{new_red}{\xleftarrow{\text{Extended Formulation}}}$};
  \foreach \a in {0,...,2}
    \draw[thick] (\i+0.5,\a+\h) -- (\i+3.5,\a+\h);
  \foreach \x in {{\i+1,2},{\i+1,1},{\i+2,0},{\i+2,1},{\i+3,2},{\i+3,1}}
    \filldraw (\x+\h) circle (1.5pt+\h);
  \draw[thick] (\i+1,1+\h) -- (\i+1,2+\h);
  \draw[thick] (\i+2,0+\h) -- (\i+2,1+\h);
  \draw[thick] (\i+3,1+\h) -- (\i+3,2+\h);
  
  \node[] at (\i+3.7,2.2+\h) {1};
  \node[] at (\i+3.7,1.2+\h) {2};
  \node[] at (\i+3.7,0.2+\h) {3};
}

  \node[] at (.3,2.2+\h) {\textcolor[rgb]{0,0,1}{1}};
  \node[] at (.3,1.2+\h) {\textcolor[rgb]{0,0,1}{2}};
  \node[] at (.3,.2+\h) {\textcolor[rgb]{0,0,1}{3}};
  
\foreach \i in {1.2,2.2}{
  \node[] at (\i+.3,2.2+\h) {\textcolor[rgb]{0,.6,0}{1}};
  \node[] at (\i+.3,1.2+\h) {\textcolor[rgb]{0,.6,0}{2}};
  \node[] at (\i+.3,.2+\h) {\textcolor[rgb]{0,.6,0}{3}};
}
\node[] at (.8,1.5+\h) {\textcolor{new_red}{0}};
\node[] at (1.8,.5+\h) {\textcolor{new_red}{0}};
\node[] at (2.8,1.5+\h) {\textcolor{new_red}{0}};

\node[] at (4.5,1+\h) {and};

\node[] at (5.3,2.2+\h) {\textcolor{new_blue}{3}};
\node[] at (5.3,1.2+\h) {\textcolor{new_blue}{1}};
\node[] at (5.3,.2+\h) {\textcolor{new_blue}{2}};

\node[] at (6.5,2.2+\h) {\textcolor[rgb]{0,.6,0}{1}};
\node[] at (6.5,1.2+\h) {\textcolor[rgb]{0,.6,0}{3}};
\node[] at (6.5,.2+\h) {\textcolor[rgb]{0,.6,0}{2}};

\node[] at (7.5,2.2+\h) {\textcolor[rgb]{0,.6,0}{1}};
\node[] at (7.5,1.2+\h) {\textcolor[rgb]{0,.6,0}{2}};
\node[] at (7.5,.2+\h) {\textcolor[rgb]{0,.6,0}{3}};

\node[] at (5.8,1.5+\h) {\textcolor{new_red}{2}};
\node[] at (6.8,.5+\h) {\textcolor{new_red}{1}};
\node[] at (7.8,1.5+\h) {\textcolor{new_red}{0}};

\node[] at (\l,2.2+\h) {\textcolor{new_blue}{$v_1=2$}};
\node[] at (\l,1.2+\h) {\textcolor{new_blue}{$v_2=1.5$}};
\node[] at (\l,.2+\h) {\textcolor{new_blue}{$v_3=2.5$}};

\node[] at (\l+1.5,2.2+\h) {\textcolor[rgb]{0,.6,0}{1}};
\node[] at (\l+1.5,1.2+\h) {\textcolor[rgb]{0,.6,0}{2.5}};
\node[] at (\l+1.5,.2+\h) {\textcolor[rgb]{0,.6,0}{2.5}};

\node[] at (\l+2.5,2.2+\h) {\textcolor[rgb]{0,.6,0}{1}};
\node[] at (\l+2.5,1.2+\h) {\textcolor[rgb]{0,.6,0}{2}};
\node[] at (\l+2.5,.2+\h) {\textcolor[rgb]{0,.6,0}{3}};

\node[] at (\l+1.6,1.6+\h) {\textcolor{new_red}{$x_3=1$}};
\node[] at (\l+2.6,.6+\h) {\textcolor{new_red}{$x_2=.5$}};
\node[] at (\l+3.6,1.6+\h) {\textcolor{new_red}{$x_1=0$}};
}

\node[] at (9.8,1+\h) {$\xRightarrow{\text{average}}$};
\end{tikzpicture}
    \caption{An extended formulation for permutation on $n=3$ items. 
    The canonical permutation is $[1,2,3]$.
    Elements of \textcolor{new_blue}{$\vvec$} are in \textcolor{new_blue}{blue},
    \textcolor{new_red}{$\xvec$} in \textcolor{new_red}{red}, 
    and the intermediate values are in \textcolor[rgb]{0,.6,0}{green}.    
    }
    \label{fig:ext-form-ex}
\end{figure}
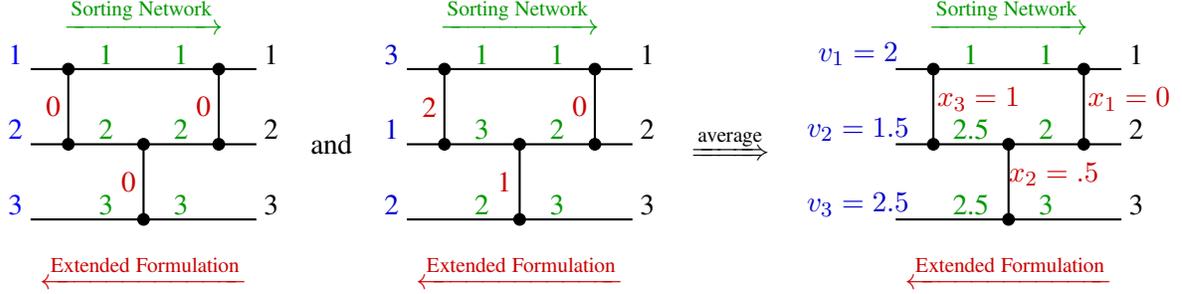


\paragraph{Learning Permutations and Huffman Trees}
As described in the previous subsections, the polytope $\Vcal$ of both permutations and Huffman trees can be efficiently described using $m$ inequality and $n$ equality constraints\footnote{Positivity constraints are excluded as they are already enforced due to definition of $\Delta(\cdot || \cdot)$, and Huffman trees require additional $\Ocal(n \log n)$ inequality constraints beyond those corresponding to the sorting network.}:
\begin{equation}
\Vcal = \{ \vvec \in \RR^n_+ | \exists \xvec \in \RR^m_+ : A \xvec \leq \bvec \text{ and } \vvec = M \xvec + \cvec \} \nonumber
\end{equation}
Adding the slack variables $\svec \in \RR_+^m$, we obtain the augmented formulation $\Wcal$:
\begin{equation}
\Wcal = \{ \wvec = (\vvec, \xvec, \svec) \in \RR^{n+2m}_+ |  A \xvec + \svec = \bvec \text{ and } \vvec = M \xvec + \cvec \} \nonumber
\end{equation}

Note that all the wire values (i.e.~$v_i$'s), as well as $x_i$'s and $s_i$'s are upperbounded by $U=n$.
Using the AKS sorting networks with $m = \Ocal (n \log n)$ comparators \citep{ajtai1983sorting}, we can obtain the regret bounds below from Theorem \ref{thm:bounds}:
\begin{corollary}
XF-Hedge has the following regret bound when learning either permutations or Huffman trees:
\begin{align*}
\EE \sbr{\sum_{t=1}^T \hvechat^{t-1} \cdot \vec{\ell}^t } 
- \underset{\hvec \in \Hcal}{\min} \sum_{t=1}^T \hvec \cdot \vec{\ell}^t
= \Ocal \left(
n \, (\log n)^{\frac{1}{2}} \sqrt{L^*} + n^2 \, \log n
\right)
\end{align*}
\end{corollary}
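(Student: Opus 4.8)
The plan is to derive the corollary purely by substituting the concrete parameter values of the permutation and Huffman-tree constructions into the generic bound of Theorem~\ref{thm:bounds} and then simplifying asymptotically. First I would pin down the four quantities that enter that bound, namely the dimension split $(n,m,r)$ and the radius $U$. The value $n$ is simply the number of items (permutations) or symbols (Huffman trees). For $m$, the AKS sorting network has $m=\Ocal(n\log n)$ comparators, hence that many reflection relations and that many $\xvec$-coordinates; for Huffman trees the extra $\Ocal(n\log n)$ comparators and the (variable-free) linear maps leave this asymptotic unchanged. Because the augmented formulation introduces exactly one slack coordinate per slackened inequality and $\wvec=(\vvec,\xvec,\svec)\in\RR^{n+2m}_+$, the number of slack coordinates is $r=m=\Ocal(n\log n)$. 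Thus the total dimension is $n+m+r=n+2m=\Ocal(n\log n)$.

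Next I would check the two hypotheses of Theorem~\ref{thm:bounds}. The losses lie in $[0,1]^n$ by the problem setup, so the only substantive point is the radius bound $\|\wvec(\hvec)\|_\infty\le U$ with $U=n$. For the $\vvec$-coordinates this holds because the first-order representations are permutations of $\{1,\dots,n\}$ or leaf depths of an $n$-leaf binary tree (bounded by $n-1$), and every intermediate wire value produced along the network is a convex combination of input values, hence also bounded by $n$. For each reflection variable $x_i$ we have $x_i\le v_j-v_i\le n$ by~\eq{eq:single-reflection-formulation}, and each slack coordinate, which measures the unused capacity of its reflection inequality $x\le v_j-v_i$, is likewise bounded by $v_j-v_i\le n$. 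Hence $U=n$ is admissible for both classes.

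Finally I would substitute $(n+m+r)\,U=\Ocal(n\log n)\cdot n=\Ocal(n^2\log n)$ into the bound $\sqrt{2L^*(n+m+r)U}+(n+m+r)U$. The additive term is immediately $\Ocal(n^2\log n)$, while the square-root term becomes $\sqrt{2L^*\cdot\Ocal(n^2\log n)}=\Ocal\!\big(n(\log n)^{1/2}\sqrt{L^*}\big)$, and together these give the claimed $\Ocal\!\big(n(\log n)^{1/2}\sqrt{L^*}+n^2\log n\big)$.

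I expect the only real obstacle to lie in the radius verification of the second step: it is easy to bound the $\vvec$-coordinates, but one must also argue that the reflection variables and the slacks stay bounded by $n$ along the entire sorting-network computation, keeping in mind (per the footnote to~\eq{eq:single-reflection-formulation}) that $v_i$ and $v_j$ are themselves functions of earlier reflection variables. The dimension counting and the closing asymptotic simplification are routine once $U=n$ and $m=\Ocal(n\log n)$ are in hand.
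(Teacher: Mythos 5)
Your proposal is correct and follows essentially the same route as the paper: the paper's own justification is a one-line substitution of $U=n$ and $m=r=\Ocal(n\log n)$ (from the AKS network, with the extra Huffman-tree comparators absorbed asymptotically) into Theorem~\ref{thm:bounds}. Your careful verification of the radius bound $\|\wvec(\hvec)\|_\infty\le n$ for the wire values, reflection variables, and slacks is exactly the claim the paper asserts without detail, and your argument for it is sound.
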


\section{Fast Prediction with Reflection Relations}
\label{sec:fast-prediction}
From its current
weight vector $\wvec = (\vvec, \xvec, \svec) \in \Wcal$, XF-Hedge randomly selects an object $\hvechat$ from the combinatorial class $\Hcal$ in such a way that $\EE \sbr{\hvechat} = \vvec$. 
In Component Hedge and similar algorithms \citep{helmbold2009learning, koolen2010hedging, yasutake2011online, warmuth2008randomized}, 
this is done by decomposing $\vvec$ into a convex combination of objects in $\Hcal$ followed by sampling. 
In this section, we give a new more direct prediction method for combinatorial classes $\Hcal$ whose extended formulation is constructed by reflection relations.
Our method is faster due to avoiding the decomposition.
 

The values $\xvec$ and $\xvec + \svec$ can be interpreted as amount swapped and the maximum swap allowed for the comparators in the sorting networks, respectively. 
Therefore, it is natural to define $x_i / (x_i + s_i)$ as \textit{swap probability} associated with the $i$th comparator for $i \in \{1..m\}$. 
Algorithm \ref{alg:prediction} incorporates the notion of swap probabilities to construct an efficient sampling procedure from a distribution $\Dcal$ which has the right expectation. 
It starts with the canonical object (e.g.~identity permutation) and feeds it through the reflection relations. 
Each reflection $i$ is taken with probability $x_i / (x_i + s_i)$.
The theorem below (proved in the Appendix \ref{sec:proof_thm_fast-pred}) guarantees the correctness and efficiency of this algorithm.

\begin{theorem}
\label{thm:fast-pred}
(i) Given $(\vvec, \xvec, \svec) \in \Wcal$, Algorithm \ref{alg:prediction} samples $\hvec$ from a $\Dcal$ such that $\EE_\Dcal[\hvec] = \vvec$. \\(ii) The time complexity of Algorithm \ref{alg:prediction} is $\Ocal(m)$.
\end{theorem}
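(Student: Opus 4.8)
The plan is to prove (i) by induction along the sequence of reflection relations, showing that the algorithm's random partial object agrees \emph{in expectation} with the deterministic intermediate point produced by the extended formulation. Index the relations in the order Algorithm~\ref{alg:prediction} processes them by $k=1,\dots,m$, and let the $k$th relation swap coordinates $i_k,j_k$ with canonical orientation $v_{i_k}\le v_{j_k}$. Let $\vvec^{(k)}$ be the point obtained by applying the first $k$ partial reflections \eq{eq:single-reflection-formulation} to the canonical corner, so $\vvec^{(0)}=\cvec$ and $\vvec^{(m)}=\vvec$, and let $\hvec^{(k)}$ be the random object held by the algorithm after its $k$th step, with $\hvec^{(0)}=\cvec$. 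The claim to establish is $\EE[\hvec^{(k)}]=\vvec^{(k)}$ for all $k$; the case $k=m$ is exactly $\EE_\Dcal[\hvec]=\vvec$. The base case $k=0$ is immediate since both sides equal $\cvec$.

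For the inductive step I would first extract the algebraic identity linking the swap probability to the swap capacity. The slack equation $A\xvec+\svec=\bvec$ encodes the upper bound of the $k$th relation as $x_k+s_k=v^{(k-1)}_{j_k}-v^{(k-1)}_{i_k}$, so $x_k+s_k$ is the maximal swap capacity at that stage and $p_k:=x_k/(x_k+s_k)=x_k/(v^{(k-1)}_{j_k}-v^{(k-1)}_{i_k})$ (in the degenerate case $v^{(k-1)}_{j_k}=v^{(k-1)}_{i_k}$ we have $x_k=s_k=0$ and set $p_k=0$, the swap being vacuous). The algorithm produces $\hvec^{(k)}$ from $\hvec^{(k-1)}$ by exchanging the values in coordinates $i_k$ and $j_k$ with probability $p_k$, and doing nothing otherwise, using a fresh coin independent of $\hvec^{(k-1)}$. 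Because this full swap exchanges whatever values occupy the two coordinates and fixes all others, and because $p_k$ is a constant not depending on the realization, conditioning on $\hvec^{(k-1)}$ and then taking expectations gives
\begin{align*}
\EE[\hvec^{(k)}_{i_k}] &= (1-p_k)\,\EE[\hvec^{(k-1)}_{i_k}] + p_k\,\EE[\hvec^{(k-1)}_{j_k}],\\
\EE[\hvec^{(k)}_{j_k}] &= (1-p_k)\,\EE[\hvec^{(k-1)}_{j_k}] + p_k\,\EE[\hvec^{(k-1)}_{i_k}],
\end{align*}
with every other coordinate unchanged. Substituting the inductive hypothesis $\EE[\hvec^{(k-1)}]=\vvec^{(k-1)}$ and the identity $p_k(v^{(k-1)}_{j_k}-v^{(k-1)}_{i_k})=x_k$ yields $\EE[\hvec^{(k)}_{i_k}]=v^{(k-1)}_{i_k}+x_k$ and $\EE[\hvec^{(k)}_{j_k}]=v^{(k-1)}_{j_k}-x_k$, which is exactly the partial reflection $\vvec^{(k)}=\vvec^{(k-1)}+x_k(\evec_{i_k}-\evec_{j_k})$ of \eq{eq:single-reflection-formulation}. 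This closes the induction, and any deterministic linear map inserted between relations (as needed for Huffman trees) commutes with expectation and so preserves the invariant.

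To finish (i) I would note that $\Dcal$ is supported on $\Hcal$: each realization applies a subsequence of the full one-sided reflections starting from $\cvec\in\Hcal$, and by the property of the reflection-relation construction of \cite{kaibel2013constructing} every such subsequence maps the canonical corner to a corner of the polytope, i.e.\ a bona fide object of $\Hcal$ (for permutations this is transparent: a sequence of transposition-swaps of the identity is again a permutation). For (ii), the algorithm makes a single pass over the $m$ relations; at relation $k$ it computes $p_k=x_k/(x_k+s_k)$, draws one Bernoulli$(p_k)$ coin, and exchanges at most two coordinates, each an $\Ocal(1)$ operation, for $\Ocal(m)$ total (the $\Ocal(n\log n)$ auxiliary comparators and linear maps for Huffman trees are also $\Ocal(1)$ apiece and absorbed into $\Ocal(m)$).

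The main obstacle is the inductive step, and specifically reconciling two quantities of different character: the exchange acts on the \emph{random} values currently sitting in coordinates $i_k,j_k$, whereas $p_k$ is a \emph{fixed} number read off the deterministic weight vector. The step succeeds precisely because the full swap is value-agnostic and $p_k$ is independent of the realization, so linearity of expectation lets one replace the random coordinate values by their means --- the deterministic intermediate point --- while the capacity identity $x_k+s_k=v^{(k-1)}_{j_k}-v^{(k-1)}_{i_k}$ converts the mixing weight $p_k$ back into the additive displacement $x_k$ demanded by \eq{eq:single-reflection-formulation}.
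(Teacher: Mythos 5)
Your proposal is correct and follows essentially the same route as the paper's proof: an induction over the $m$ comparators in which the slack constraint supplies the swap-capacity identity $x_k+s_k=v^{(k-1)}_{j_k}-v^{(k-1)}_{i_k}$, and linearity of expectation converts the Bernoulli$(x_k/(x_k+s_k))$ swap into the additive displacement $x_k(\evec_{i_k}-\evec_{j_k})$ of the partial reflection, with the Huffman-tree linear maps commuting with expectation. The only difference is cosmetic --- you write the inductive step coordinatewise where the paper phrases it via the transposition matrix identity $I-\frac{x_k}{x_k+s_k}M_kM_k^T=\frac{s_k}{x_k+s_k}I+\frac{x_k}{x_k+s_k}T_{i_kj_k}$ --- and your explicit treatment of the degenerate case $x_k+s_k=0$ and of why $\Dcal$ is supported on $\Hcal$ are small additions the paper leaves implicit.
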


Using the AKS sorting networks \citep{ajtai1983sorting}, 
Algorithm \ref{alg:prediction} predicts in $\Ocal(n \log n)$ time. 
This improves the previously known $\Ocal(n^2)$ prediction procedure for mean-based algorithms\footnote{It also matches the time complexity of prediction step for non-mean-based permutation-specialized OnlineRank \citep{ailon2014improved} and also the general FPL \citep{kalai2005efficient} algorithm both for permutations and Huffman Trees.} for permutations \citep{yasutake2011online, suehiro2012online}.

\begin{algorithm}
\begin{algorithmic}[1]
\State \textbf{Input: } $(\xvec, \svec) \in \RR_+^{2m}$ 
\State \textbf{Output: } A prediction $\hvechat \in \Hcal$ 
\State $\hvechat \gets \cvec$ 
\For{$k=1$ to $m$}
\State $(i_k, j_k) \leftarrow $ wire indices associated with the $k$-th comparator 
\If {$x_i =0$} 
\State \textbf{continue}
\Else
\State Switch the $i_k$th and $j_k$th components of $\hvechat$ w.p. $x_k / (x_k + s_k)$.
\EndIf
\EndFor
\Return $\hvechat$
\end{algorithmic}
\caption{Fast-Prediction}
\label{alg:prediction}
\end{algorithm}

\section{Conclusion and Future Work}
\label{sec:conclusion}
Table~\ref{table:compare-bounds} contains a comparison of the regret bounds for the new XF-Hedge algorithm, 
OnlineRank \citep{ailon2014improved}, Follow the Perturbed Leader (FPL) \citep{kalai2005efficient}, and the Hedge algorithm \citep{freund1997decision} 
which inefficiently maintains an explicit weight for each of the exponentially many permutations or Huffman trees.
%
%
For permutations, the regret bound of general XF-Hedge methodology is within a factor $\sqrt{\log n}$ of the state-of-the-art  algorithm OnlineRank \citep{ailon2014improved}. 
When compared with the generic explicit Hedge algorithm (which is not computationally efficient) and FPL, XF-Hedge has a better loss bound by a factor of $\sqrt{ n }$.

When comparing on Huffman trees, we consider two loss regimes: 
one where the loss vectors are from the general unit cube, and consequently, the per-trial losses are in 
$\Ocal(n^2)$ (like permutations), 
and another where the loss vectors represent frequencies and lie on the unit simplex so the per-trial losses are in $\Ocal(n)$.  
In the first case, as with permutations, XF-Hedge has the best asymptotic bounds.
In the second case, the lower loss range benefits Hedge and FPL, and the regret bounds of all three algorithms match. 

\begin{table} 
\centering
 \begin{tabular}{| l | l | l | l |}
 \hline
 \multirow{2}{*}{Algorithm} &  \multirow{2}{*}{Permutation} & \multicolumn{2}{|c|}{Huffman Tree}\\
\cline{3-4}
   &   & $\vec{\ell}^t \in$ Unit Cube & $\vec{\ell}^t \in$ Unit Simplex \\
 \hline
 XF-Hedge & $\Ocal \begin{pmatrix}
 n  (\log n)^{\frac{1}{2}} \sqrt{L^*}  \\
 + n^2 \log n
\end{pmatrix}$ 
 & $\Ocal \begin{pmatrix}
 n  (\log n)^{\frac{1}{2}} \sqrt{L^*}  \\
 + n^2 \log n
\end{pmatrix}$ 
 & $\Ocal \begin{pmatrix}
 n  (\log n)^{\frac{1}{2}} \sqrt{L^*}  \\
 + n^2 \log n
\end{pmatrix}$  \\
 \hline
 OnlineRank  & {$\Ocal( n \sqrt{L^*} + n^2)$} & {--} & {--}\\
 \hline
  FPL  & $\Ocal \begin{pmatrix}
 n^{\frac{3}{2}} (\log n)^{\frac{1}{2}}  \sqrt{L^*}   \\
 + n^3 \log n
\end{pmatrix}$ 
 &   $\Ocal \begin{pmatrix}
 n^{\frac{3}{2}} (\log n)^{\frac{1}{2}}  \sqrt{L^*}   \\
 + n^3 \log n
\end{pmatrix}$
 &   $\Ocal \begin{pmatrix}
 n  (\log n)^{\frac{1}{2}} \sqrt{L^*}  \\
 + n^2 \log n
\end{pmatrix}$  \\
 \hline
Hedge Algorithm & $\Ocal \begin{pmatrix}
 n^{\frac{3}{2}} (\log n)^{\frac{1}{2}}  \sqrt{L^*}   \\
 + n^3 \log n
\end{pmatrix}$
 &  $\Ocal \begin{pmatrix}
 n^{\frac{3}{2}} (\log n)^{\frac{1}{2}}  \sqrt{L^*}   \\
 + n^3 \log n
\end{pmatrix}$
 &  $\Ocal \begin{pmatrix}
 n  (\log n)^{\frac{1}{2}} \sqrt{L^*}  \\
 + n^2 \log n
\end{pmatrix}$  \\
 \hline
 \end{tabular}
 \caption{Comparing the regret bounds of XF-Hedge with other existing algorithms in different problems and different loss regimes.}
 \label{table:compare-bounds}
\end{table}

In traditional online expert settings, projections are exact (i.e.~renormalizing a weight vector). In contrast, for learning combinatorial objects iterative Bregman projections are used \citep{koolen2010hedging, helmbold2009learning}. These methods 
are known to converge to the exact projection \citep{bregman1967relaxation, bauschke1997legendre}, 
believed to have fast linear convergence \citep{dhillon2007matrix}, 
and empirically are very efficient \citep{koolen2010hedging}. 
However, the iterative nature of the projection step necessitates an analysis to bound the additional loss incurred in the prediction step due to stopping short of full convergence in the projection step. 
Appendix \ref{sec:apprx-breg} provides such analysis for XF-Hedge 
when learning Huffman trees and permutations.

In conclusion, we have presented a general methodology for creating online learning algorithms from extended formulations. 
Our main contribution is the XF-Hedge algorithm that enables the efficient use of Component Hedge techniques 
on complex classes of combinatorial objects. 
Because XF-Hedge is in the Bregman projection family of algorithms, 
many of the tools from the expert setting are likely to carry over. 
This includes 
lower bounding weights for shifting comparators \citep{herbster1998tracking},
long-term memory \citep{bousquet2002tracking}, 
and adapting the updates to the bandit setting \citep{audibert2011minimax}.
Several important areas remain for potentially fruitful future work:

\paragraph{More Applications}
There is a rich literature on extended formulation for different combinatorial objects \citep{conforti2010extended, kaibel2011extended, pashkovich2012extended, afshari2017two, fiorini2013combinatorial}.
Which combinatorial classes have both natural online losses and suitable extended formulations
so XF-Hedge is appropriate?
For instance, building on the underlying ideas of XF-Hedge, \cite{rahmanian2017onlinedp} developed a family of learning algorithms focusing on extended formulations constructed by dynamic programming.

\paragraph{More Complex Losses} 
The redundant representation we introduced can be used to express different losses.
Although our current applications do not assign loss to the extended formulation variables ($\xvec$) and their associated slack variables ($\svec$), 
these additional variables enable the expression of different kinds of losses.
For what natural losses could be these additional variables useful?





\acks{We thank Manfred K. Warmuth for his helpful discussions. We would like to acknowledge support for this project from the National Science Foundation (NSF grant IIS-1619271). }



\bibliographystyle{plain}
\bibliography{alt2018}

\vfill

\pagebreak

\appendix

\label{sec:appendix}
\section{Table of Notations}
\label{app:notations}
\begin{table}[H]
\centering
\begin{tabular}{| c | l |}
\hline
Symbol & Description \\
\hline
$n$ & The dimensionality of the combinatorial object\\
$\Hcal$ & The set of all objects in $\RR_+^n$ \\
$\hvec$ & A particular object in $\Hcal$ \\
$T$ & The number of trials\\
$\ellvec$ & The loss vector revealed by the adversary in $[0,1]^n$\\
$\Vcal$ & The convex hull of all objects in $\Hcal$ \\
$\vvec$ & A point in $\Hcal$ \\
$m$ & The dimensionality of the extended formulation\\
$\xvec$ & A point in extended formulation \\
$\Xcal$ & The space of extended formulations $\xvec$ \\
$\Wcal$ & The augmented formulation \\
$\wvec$ & A point in the augmented formulation $\Wcal$ \\
$\svec$ & The slack vector in the augmented formulation \\
$r$ & The dimensionality of the slack vector \\
$\Delta( \cdot || \cdot)$ & the unnormalized relative entropy\\
~ & i.e.~ $\Delta(\wvec_1 || \wvec_2) = \sum_{i} w_{1,i} \log \frac{w_{1,i}}{w_{2,i}} + w_{2,i} - w_{1,i} $\\
$\wvec(\hvec)$ & A point in $\Wcal$ associated with the object $\hvec$ \\
$U$ & An upper-bound for $\| \wvec(\hvec) \|_\infty$\\
$L^*$ & The cumulative loss of the best object in hindsight \\
~ & i.e.~ $\underset{\hvec \in \Hcal}{\min} \sum_{t=1}^T \hvec \cdot \vec{\ell}^t$\\
$M$ & the $n \times m$ matrix representing the affine transformation \\
~ & corresponding to $m$ reflection relations\\
$\cvec$ & the canonical point in $\Hcal$ e.g.~$[1,2,\ldots, n]^T$ for permutations\\
$A$, $\bvec$ & the $m \times m$ matrix of coefficients and $m$-dimensional vector \\
~ & of constant terms specifying $\Xcal$ along $\xvec \geq \zero$ i.e.~ $A \xvec \leq \bvec$\\
\hline
\end{tabular}
 \caption{Table of notations in the order of appearance in the paper.}
 \label{table:notations}
\end{table}


\section{Proof of Lemma \ref{lemma:ch-bounds}}
\label{sec:proof:lemma:ch-bounds}

\begin{proof}
Assuming $\wvec = (\vvec, \xvec, \svec)$ and $\Lvec = (\vec{\ell}, \zero, \zero)$:
\begin{align*}
(1 - e^{-\eta}) \vvec^{t-1} \cdot \vec{\ell}^t 
&= (1 - e^{-\eta}) \wvec^{t-1} \cdot \Lvec^t 
\leq \sum_{i}  w_i^{t-1} (1 - e^{- \eta \, L^t_i}) \\
&= \Delta(\wvec(\hvec) || \wvec^{t-1}) - \Delta(\wvec(\hvec) || \wvectil^{t-1}) + \eta \, \wvec(\hvec) \cdot \Lvec^t \\
&= \Delta(\wvec(\hvec) || \wvec^{t-1}) - \Delta(\wvec(\hvec) || \wvectil^{t-1}) + \eta \, \hvec \cdot \vec{\ell}^t  \\
&\leq \Delta(\wvec(\hvec) || \wvec^{t-1}) - \Delta(\wvec(\hvec) || \wvec^{t}) + \eta \, \hvec \cdot \vec{\ell}^t \\
\end{align*}

The first inequality is obtained using $1 - e^{- \eta x} \geq (1 - e^{-\eta})x$ for $x \in [0,1]$ as done in \cite{littlestone1994weighted}. The second inequality is a result of the Generalized Pythagorean Theorem \citep{herbster2001tracking}, since $\wvec^{t}$ is a Bregman projection of $\wvechat^{t-1}$ into the convex set $\Wcal$ which contains $\wvec(\hvec)$. By summing over $t=1 \ldots T$ and using the non-negativity of divergences, we obtain:
\begin{align*}
&(1 - e^{-\eta}) \sum_{t=1}^T \vvec^{t-1} \cdot \vec{\ell}^t 
\leq \Delta(\wvec(\hvec) || \wvec^{0}) - \Delta(\wvec(\hvec) || \wvec^{T}) + \eta \, \sum_{t=1}^T \hvec \cdot \vec{\ell}^t \\
&\longrightarrow \EE \sbr{\sum_{t=1}^T \hvec^{t-1} \cdot \vec{\ell}^t } \leq
\frac{ \eta \, \sum_{t=1}^T \hvec \cdot \vec{\ell}^t + \Delta(\wvec(\hvec) || \wvec^{0}) }{1 - e^{- \eta}}
\end{align*}

Let $L^* = \underset{\hvec \in \Hcal}{\min} \sum_{t=1}^T \hvec \cdot \vec{\ell}^t$. We can tune $\eta$ as instructed in Lemma 4 in \cite{freund1997decision}:
\begin{displaymath}
\EE \sbr{\sum_{t=1}^T \hvec^{t-1} \cdot \vec{\ell}^t } 
- \underset{\hvec \in \Hcal}{\min} \sum_{t=1}^T \hvec \cdot \vec{\ell}^t
\leq
\sqrt{2 L^* \, \Delta(\wvec(\hvec) || \wvec^0)  } + \Delta(\wvec(\hvec) || \wvec^0)
\end{displaymath}

\end{proof}

\section{Proof of Lemma \ref{lemma:xf-init}}
\label{sec:proof:lemma:xf-init}
\begin{proof}
Let $\widetilde{\wvec} = U \, \one$ in which $\one \in \RR^{m+n+r}$ is a vector with all ones in its components. Now let $\wvec^0$ be the Bregman projection of $\widetilde{\wvec}$ onto $\Wcal$, that is:
\begin{displaymath}
\wvec^{0} 
= \underset{\wvec \in \Wcal}{\arg\min} \Delta(\wvec || \widetilde{\wvec}) 
\end{displaymath}
Now for all $\hvec \in \Hcal$, we have:
\begin{align*}
\Delta(\wvec(\hvec) || \wvec^{0}) 
&\leq \Delta(\wvec(\hvec) || \widetilde{\wvec})  && \text{Pythagorean Theorem} \\
&= \sum_{i \in \{1..n+m+r\}} \left( \wvec(\hvec) \right)_i \log \frac{\left( \wvec(\hvec) \right)_i}{U} +  U - \left( \wvec(\hvec) \right)_i \\
&\leq \sum_{i \in \{1..n+m+r\}}  U && \left( \wvec(\hvec) \right)_i \leq U \\
&= (n+m+r) \, U 
\end{align*}
\end{proof}

\section{Construction of Extended Formulation Using Reflection Relations}
\label{sec:inductive-ext-form}
Instead of starting with a single corner, one could also consider passing an entire polytope as an input through the sequence of (partial) reflections to generate a new polytope. Using this fact, Theorem 1 in \cite{kaibel2013constructing} provides an inductive construction of higher dimensional polytopes via sequences of reflection relations. Concretely, let $P^n_{\text{obj}}$ be the polytope of a given combinatorial object of size $n$. The typical approach is to properly embed $P^n_{\text{obj}} \subset \RR^n$ into $\Phat^n_{\text{obj}} \subset \RR^{n+1}$, and then feed it through an appropriate sequence of reflection relations as an input polytope in order to obtain an extended formulation for $P^{n+1}_{\text{obj}} \subset \RR^{n+1}$. Theorem 1 in \cite{kaibel2013constructing} provides  sufficient conditions for the correctness of this procedure. Again, if polynomially many reflection relations are used to go from $n$ to $n+1$, then we can construct an extended formulation of polynomial size for $P^n_\text{obj}$ with polynomially many constraints. In this paper, however, we work with batch construction of the extended formulation as opposed to the inductive construction.

\section{Facets Constructed by Reflection Relations}
\label{sec:struc-ext-form}

\begin{lemma}
\label{lemma:A-matrix}
Let $M$ be the matrix representing the affine transformation corresponding to $m$ reflection relations and
\begin{displaymath}
A = \text{Tri}(M^T M) + I, \quad \bvec = - M^T \cvec 
\end{displaymath}
in which Tri$(\cdot)$ is a function over square matrices which zeros out the upper triangular part of the input including the diagonal. Then the extended formulation space $\Xcal$ is
\begin{displaymath}
A \xvec \leq \bvec, \qquad \xvec \geq \zero
\end{displaymath}
or equivalently with slack variables $\svec$
\begin{displaymath}
A \xvec + \svec = \bvec, \qquad \xvec, \svec \geq \zero
\end{displaymath}
\end{lemma}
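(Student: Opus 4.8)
The plan is to track the reflection relations in the order they are applied, read off the single constraint that each one contributes, and then assemble these into matrix form. Recall from~(\ref{eq:single-reflection-formulation}) that the relations act sequentially starting from the canonical corner: setting $\vvec^{(0)} = \cvec$ and letting $\mvec_k = \evec_{i_k} - \evec_{j_k}$ denote the reflection direction of the $k$-th relation (the $k$-th column of $M$), the $k$-th relation produces $\vvec^{(k)} = \vvec^{(k-1)} + x_k\,\mvec_k$. Composing all $m$ relations gives $\vvec = \vvec^{(m)} = \cvec + M\xvec$, which is exactly the affine map $\vvec = M\xvec + \cvec$ already recorded. The nonnegativity half $\xvec \geq \zero$ of each constraint pair in~(\ref{eq:single-reflection-formulation}) is immediate, so the real content is to show that the upper bounds $x_k \leq v^{(k-1)}_{j_k} - v^{(k-1)}_{i_k}$ are precisely $A\xvec \leq \bvec$.

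First I would rewrite each upper bound intrinsically. Since $v^{(k-1)}_{j_k} - v^{(k-1)}_{i_k} = -\,\mvec_k \cdot \vvec^{(k-1)}$ and $\vvec^{(k-1)} = \cvec + \sum_{l<k} x_l\,\mvec_l$ depends only on the earlier variables (the key structural fact), substituting and rearranging yields the single linear inequality
\begin{displaymath}
x_k + \sum_{l < k} (\mvec_k \cdot \mvec_l)\, x_l \;\leq\; -\,\mvec_k \cdot \cvec .
\end{displaymath}
Here the coefficient of $x_k$ is exactly $1$: the $k$-th relation never enters its own bound, so the diagonal quantity $\mvec_k \cdot \mvec_k = \|\mvec_k\|^2 = 2$ does not appear, and the only contribution is the $x_k$ moved over from the left. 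The coefficient of $x_l$ for $l < k$ is $\mvec_k \cdot \mvec_l = (M^TM)_{kl}$, and the right-hand side is $(-M^T\cvec)_k$.

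Stacking these $m$ inequalities over $k$, the coefficient matrix has entries $1$ on the diagonal, $(M^TM)_{kl}$ strictly below the diagonal, and $0$ strictly above it (future relations contribute nothing). This is exactly $\mathrm{Tri}(M^TM) + I$: the operator $\mathrm{Tri}$ discards the diagonal and the upper triangle of $M^TM$ --- including the nuisance value $2$ on the diagonal --- and the $+I$ supplies the correct diagonal $1$, while the right-hand side vector is $-M^T\cvec = \bvec$. This establishes $A\xvec \leq \bvec,\ \xvec \geq \zero$. The slack form $A\xvec + \svec = \bvec,\ \xvec,\svec \geq \zero$ then follows at once by setting $\svec = \bvec - A\xvec$.

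I expect the main obstacle to be careful bookkeeping rather than any deep idea. The two points that need genuine care are, first, justifying that the bound of relation $k$ depends only on $x_1,\dots,x_{k-1}$ --- this is what forces the system to be lower triangular and is an immediate consequence of indexing the relations by the order in which they are applied (which, for these constructions, is the reverse of the sorting-network order). Second, one must explain the mismatch between the diagonal of $M^TM$ (equal to $2$, since each $\mvec_k$ has exactly two nonzero entries $\pm 1$) and the diagonal of $A$ (equal to $1$), which is precisely why the statement subtracts the diagonal via $\mathrm{Tri}$ and re-adds $I$ rather than writing $A$ directly in terms of $M^TM$.
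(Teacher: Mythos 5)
Your proposal is correct and follows essentially the same route as the paper's proof: express $\vvec^{(k-1)} = \cvec + \sum_{l<k} x_l \mvec_l$, substitute into the upper bound $x_k \leq -\mvec_k^T \vvec^{(k-1)}$ from~(\ref{eq:single-reflection-formulation}), and rearrange to read off the lower-triangular system $A\xvec \leq \bvec$ with unit diagonal and right-hand side $-M^T\cvec$. Your added remark explaining why the diagonal entries $\mvec_k^T\mvec_k = 2$ of $M^TM$ are discarded by $\mathrm{Tri}$ and replaced by $+I$ is a helpful clarification the paper leaves implicit, but it does not change the argument.
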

\begin{proof}
Let $\vvec^k$ be the vector in $\Vcal$ after going through the $k$th reflection relation. Also denote the $k$th column of $M$ by $M_k$. Observe that $\vvec^0 = \cvec$ and $\vvec^k = \cvec  + \sum_{i=1}^{k} M_i x_i$. Let $M_k = \evec_r - \evec_s$. Then, using (\ref{eq:single-reflection-formulation}), the inequality associated with the $k$th row of $A \xvec \leq \bvec$ will be obtained as below:
\begin{align*}
&x_k 
\leq v^{k-1}_s - v^{k-1}_r 
= - M_k^T \vvec^{k-1} 
= - M_k^T \rbr{\cvec  + \sum_{i=1}^{k-1} M_i x_i} \\
&\longrightarrow x_k + \sum_{i=1}^{k-1} M_k^T M_i x_i \leq -M_k^T \cvec = b_k
\end{align*}
Thus:
\begin{displaymath}
\forall \, i,j \in [m] \quad A_{ij} = \begin{cases}
M_i^T M_j & i>j \\
1 & i=j \\
0 & i<j
\end{cases}, \qquad
\forall \, k \in [m] \quad b_k = -M^T_k \cvec
\end{displaymath}
which concludes the proof.

\end{proof}

\section{Projection onto Each Constraint}
\label{sec:proj-to-constraint}
Each constraint of the polytope in the augmented formulation is of the form $\avec^T \wvec = a_0$. 
Formally, the projection $\wvec^*$ of a give point $\wvec$ to this constraint is solution to the following:
\begin{displaymath}
\underset{\avec^T \wvec^* = a_0}{\arg\min} \sum_{i} \, w^*_i \log \rbr{\frac{w^*_i}{w_i}} + w_i - w^*_i
\end{displaymath}

Finding the solution to the projection above for general hyperplanes and Bregman divergence can be found in Section 3 of \cite{dhillon2007matrix}. Nevertheless, for the sake of completeness, we also provide the solution for the particular case of Huffman trees and permutations described by $\Wcal$ in Section \ref{sec:instantiations} as well. Using the method of Lagrange multipliers, we have:
\begin{align*}
&L(\wvec^*, \mu) = \sum_{i} \, w^*_i \log \rbr{\frac{w^*_i}{w_i}} + w_i - w^*_i 
- \mu \rbr{\sum_{j=1}^{2m+n} a_i \, w^*_i - a_0 } \\
&\frac{\partial L}{\partial w^*_i} = \log \rbr{\frac{w^*_i}{w_i}} - \mu a_i = 0, 
\quad \forall i \in [n+2m] \\
&\frac{\partial L}{\partial \mu} = \sum_{j=1}^{2m+n} a_i \, w^*_i - a_0 = 0
\end{align*}

Replacing $\rho = e^{- \mu}$, we have $w^*_i = w_i \, \rho^{a_i}$. By enforcing $\frac{\partial L}{\partial \mu} =0$, one needs to find $\rho > 0$ such that:
\begin{equation}
\label{eq:single-proj}
\sum_{i=1}^{n+2m} a_i \, w_i \, \rho^{a_i} - a_0 = 0
\end{equation}

Observe that due to the structure of matrices $M$ and $A$ (see Lemma \ref{lemma:A-matrix}), $a_i \in \ZZ$ and $a_i \geq -1$ for all $i \in [n+2m]$, and furthermore $a_0 \geq 0$. Thus we can re-write the equation (\ref{eq:single-proj}) as the polynomial below:
\begin{displaymath}
f(\rho) = \phi_k \rho^k + \ldots + \phi_2 \rho^2 - \phi_1 \rho - \phi_0 = 0
\end{displaymath}
in which all $\phi_i$'s are positive real numbers. Note that $f(0) < 0$ and $f(\rho) \rightarrow + \infty$ as $\rho \rightarrow + \infty$. Thus $f(\rho)$ has at least one positive root. However, it can not have more than one positive roots and we can prove it by contradiction. Assume that there exist $0<r_1<r_2$ such that $f(r_1)=f(r_2)=0$. Since $f$ is convex on positive real line, using Jensen's inequality, we can obtain the contradiction below:
\begin{displaymath}
0 = f(r_1) = f \rbr{ \frac{r_2 - r_1}{r_2} \times 0 + \frac{r_1}{r_2} \times r_2 }
< \frac{r_2 - r_1}{r_2} f(0) + \frac{r_1}{r_2} f( r_2) = \frac{r_2 - r_1}{r_2} f(0) < 0
\end{displaymath}

Therefore $f$ has exactly one positive root which can be found by Newton's method starting from a sufficiently large initial point. Note that if the constraint belongs to $\vvec = M \xvec + \cvec$, since all the coefficinets are in $\{-1, 0, 1\}$, the $f$ will be quadratic and the positve root can be found through the closed form formula.

\section{Proof of Theorem \ref{thm:fast-pred}}
\label{sec:proof_thm_fast-pred}

\begin{proof}
Let $\xvec = [x_1, x_2, \ldots, x_m]^T$. Using induction, we prove that by the end of the $i$th loop of Algorithm \ref{alg:prediction}, the obtained distribution $\Dcal^{(i)}$ has the right expectation for $\xvec^{(i)} = [x_1 \ldots x_i \, 0 \ldots 0]$. 
Concretely, $\sum_{\hvec \in \Hcal} P_{\Dcal^{(i)}}[\hvec] \cdot \hvec = M \, \xvec^{(i)} + \cvec$. 
The desired result is obtained by setting $i=m$ as $\vvec = M \, \xvec + \cvec$ (see Appendix \ref{sec:struc-ext-form}).  
The base case $i=0$ (i.e.~before the first loop of the algorithm) is indeed true, since $\Dcal^{(0)}$ is initialized to follow $P_{\Dcal^{(0)}}[\cvec] = 1$, and $\xvec^{(0)} = \zero$, thus we have $\vvec^{(0)} = M \xvec^{(0)} + \cvec = \cvec$. 
Now assume that by the end of the $(k-1)$st iteration we have the right distribution $\Dcal^{(k-1)}$, namely $\vvec^{(k-1)} = \sum_{\hvec \in \Hcal} P_{\Dcal^{(k-1)}}[\hvec] \cdot \hvec$. 
Also assume that the $k$th comparator is applied on $i$th and $j$th element\footnote{Note that $j>i$ as in sorting networks the swap value is propagated to lower wires}. Thus the $k$th column of $M$ will be $M_k = \evec_i - \evec_j$. Now, according to (\ref{eq:single-reflection-formulation}) the swap capacity at $k$th comparator is:
\begin{align}
x_k + s_k &= v^{k-1}_j - v^{k-1}_i  \nonumber \\
&= \sum_{\hvec \in \Hcal} P_{\Dcal^{(k-1)}}[\hvec] \cdot  (h_{j} - h_{i} ) \nonumber \\
&= - \sum_{\hvec \in \Hcal} P_{\Dcal^{(k-1)}}[\hvec] \cdot  M_k^T \, \hvec \nonumber \\
&= - \, M_k^T \sum_{\hvec \in \Hcal} P_{\Dcal^{(k-1)}}[\hvec] \cdot  \hvec
\label{eq:swap_capacity}
\end{align}

Now observe:
\begin{align*}
\vvec^{(k)} 
&= M \xvec^{(k)} + \cvec  \\
&= x_k M_k  + M \xvec^{(k-1)} + \cvec \\
&= x_k M_k  + \vvec^{(k-1)} \\
&= x_k M_k  + \sum_{\hvec \in \Hcal} P_{\Dcal^{(k-1)}}[\hvec] \cdot \hvec \\
&= \frac{x_k}{x_k+s_k} \, M_k \rbr{x_k+s_k}   + \sum_{\hvec \in \Hcal} P_{\Dcal^{(k-1)}}[\hvec] \cdot \hvec \\
&= - \, \frac{x_k}{x_k+s_k} \ M_k M_k^T \sum_{\hvec \in \Hcal} P_{\Dcal^{(k-1)}}[\hvec] \cdot \hvec   
+ \sum_{\hvec \in \Hcal} P_{\Dcal^{(k-1)}}[\hvec] \cdot \hvec && \text{According to (\ref{eq:swap_capacity})}\\
&= \rbr{ I - \, \frac{x_k}{x_k+s_k} \, M_k M_k^T} \sum_{\hvec \in \Hcal} P_{\Dcal^{(k-1)}}[\hvec] \cdot \hvec   \\
&= \rbr{ \frac{s_k}{x_k+s_k} I + \frac{x_k}{x_k+s_k} \, T_{ij} } \sum_{\hvec \in \Hcal} P_{\Dcal^{(k-1)}}[\hvec] \cdot \hvec    \\
&= \sum_{\hvec \in \Hcal} 
\underbrace{\frac{s_k}{x_k+s_k} \, P_{\Dcal^{(k-1)}}[\hvec]}_{P_{\Dcal^{(k)}}[\hvec]}
 \cdot \hvec 
+ \underbrace{\frac{x_k}{x_k+s_k} \, P_{\Dcal^{(k-1)}}[\hvec]}_{P_{\Dcal^{(k)}}[T_{ij} \, \hvec]}
  \cdot T_{ij} \, \hvec  \\ 
&= \sum_{\hvec \in \Hcal} P_{\Dcal^{(k)}}[\hvec] \cdot \hvec
\end{align*}
in which $T_{ij}$ is a row-switching matrix that is obtained form switching $i$th and $j$th row from identity matrix. 
For Huffman trees, the linear maps introduced in \cite{pashkovich2012extended} are used to set the depths of the leaves.
It is straightforward to see that these linear maps maintain
the equality $\vvec^{(k)} = \sum_{\hvec \in \Hcal} P_{\Dcal^{(k)}}[\hvec] \cdot \hvec$ when applied to $\vvec^{(k)}$
and all $\hvec$'s in $\Hcal$. 
This concludes the inductive proof. \\

The final distribution $\Dcal$ over objects $\hvec \in \Hcal$ is decomposed into individual actions of swap/pass through the network of comparators independently. Thus one can draw an instance according to the distribution by simply doing independent Bernoulli trials associated with the comparators. It is also easy to see that the time complexity of the algorithm is $O(m)$ since one just needs to do $m$ Bernoulli trials.
\end{proof}

\section{Additional Loss with Approximate Projection}
\label{sec:apprx-breg}
Each iteration of Bregman Projection is described in Appendix \ref{sec:proj-to-constraint}. Since it is basically finding a positive root of a polynomial (which $n/(n+m)$ of the time is quadratic), each iteration is arguably efficient. Now suppose, using iterative Bregman projections, we reached at $\wvechat = (\vvechat, \xvechat, \svechat)$ which is $\epsilon$-close to the exact projection $\wvec = (\vvec, \xvec, \svec)$, that is $\| \wvec - \wvechat \|_2 < \epsilon$. In this analysis, we work with a two-level approximation: 1) approximating mean vector $\vvec$ by the mean vector $\vvectil := M \xvechat + \cvec$ and 2) approximating the mean vector $\vvectil$ by the mean vector $\vvec(\pvechat)$ (where $\pvechat = \xvechat / (\xvechat + \svechat)$ with coordinate-wise division) obtained from Algorithm \ref{alg:prediction} with $\xvechat$ and $\svechat$ as input. First, observe that:
\begin{align}
\| \vvec - \vvectil \|_2 &= \| M \, (\xvec - \xvechat) \|_2 \nonumber \\
&\leq \| M \|_F \, \| \xvec - \xvechat \|_2 \nonumber \\
&\leq (\sqrt{2 \, n}) \, \epsilon
\label{eq:close_v_vtil}
\end{align}

Now suppose we run Algorithm \ref{alg:prediction} with $\xvechat$ and $\svechat$ as input. Similar to Appendix \ref{sec:proof_thm_fast-pred}, let $M_k$ be the $k$-th column of $M$, and let $T_{\alpha\beta}$ be the row-switching matrix that is obtained from switching $\alpha$-th and $\beta$-th row in identity matrix. 
%
%
Additionally, let $\vvec^{(k)}(\pvechat)$ be the mean vector associated with the distribution $\Dcal^{(k)}$ obtained by the end of $k$-th loop of the Algorithm \ref{alg:prediction} i.e.~$\vvec^{(k)}(\pvechat) := \sum_{\hvec \in \Hcal} P_{\Dcal^{(k)}}[\hvec] \cdot \hvec$ (so $\vvec^{(m)}(\pvechat)=\vvec(\pvechat)$). Also for all $k \in \{1..m\}$ define $\vvectil^{(k)} := \cvec+ \sum_{i=1}^k  M_i \, \xhat_i $ (thus $\vvectil^{(m)} = \vvectil $). Furthermore, let $\deltavec^{(k)} := \vvec^{(k)}(\pvechat) - \vvectil^{(k)}$. Now we can write:

\begin{align*}
\vvec^{(k)}(\pvechat) 
&= \sum_{\hvec \in \Hcal} P_{\Dcal^{(k)}}[\hvec] \cdot \hvec \\
&= \sum_{\hvec \in \Hcal} \frac{\shat_k}{\xhat_k + \shat_k} P_{\Dcal^{(k-1)}}[\hvec] \cdot \hvec
+ \frac{\xhat_k}{\xhat_k + \shat_k} P_{\Dcal^{(k-1)}}[\hvec] \cdot T_{\alpha\beta} \,\hvec \\
&= (\frac{\shat_k}{\xhat_k + \shat_k} \, I + \frac{\xhat_k}{\xhat_k + \shat_k}\, T_{\alpha\beta}) \, 
\sum_{\hvec \in \Hcal} P_{\Dcal^{(k-1)}}[\hvec] \cdot \hvec \\
&= ( I - \frac{\xhat_k}{\xhat_k + \shat_k}\, M_k \, M_k^T) \, \vvec^{(k-1)}(\pvechat)  \qquad \text{ since } I-T_{\alpha\beta} = M_k \, M_k^T \\
&= ( I - \frac{\xhat_k}{\xhat_k + \shat_k}\, M_k \, M_k^T) \, \vvectil^{(k-1)}
+ ( I - \frac{\xhat_k}{\xhat_k + \shat_k}\, M_k \, M_k^T) \, \deltavec^{(k-1)}  \\
&= ( I - \frac{\xhat_k}{\xhat_k + \shat_k}\, M_k \, M_k^T) \, \left( \cvec + \sum_{i=1}^{k-1} M_i \, \xhat_i \right) 
+ ( I - \frac{\xhat_k}{\xhat_k + \shat_k}\, M_k \, M_k^T) \, \deltavec^{(k-1)}  \\
&=  \left( \cvec + \sum_{i=1}^{k-1} M_i \, \xhat_i  \right)
 - \frac{\xhat_k}{\xhat_k + \shat_k}\, M_k \, M_k^T \, \left(\cvec+ \sum_{i=1}^{k-1} M_i \, \xhat_i \right) \\
&\qquad+ ( I - \frac{\xhat_k}{\xhat_k + \shat_k}\, \, M_k \, M_k^T) \, \deltavec^{(k-1)}  \\
&=  \vvectil^{(k)} 
\underbrace{- M_k \xhat_k
 - \frac{\xhat_k}{\xhat_k + \shat_k}\, M_k \, M_k^T \, \left(\cvec+ \sum_{i=1}^{k-1} M_i \, \xhat_i \right) 
+ ( I - \frac{\xhat_k}{\xhat_k + \shat_k}\, M_k \, M_k^T) \, \deltavec^{(k-1)}}_{\deltavec^{(k)}}
\end{align*}

Now define $\phat_k := \frac{\xhat_k}{\xhat_k + \shat_k} $ and let $\text{err}_k := - \, M_k^T \, \left(\cvec+ \sum_{i=1}^{k-1} M_i \, \xhat_i  \right)  - (\xhat_k + \shat_k)$, which is -- according to Lemma \ref{lemma:A-matrix} -- the error in the $k$-th row of $A \xvec + \svec = \bvec $ using $\xvechat$ and $\svechat$ i.e.~amount by which $(A \xvechat + \svechat)_k$ falls short of $b_k$, violating the $k$-th constraint of $A \xvec + \svec = b$. Thus we obtain:
\begin{align*}
\deltavec^{(k)}  
&= - M_k \xhat_k  + \phat_k \, M_k \, (\xhat_k + \shat_k + \text{err}_k) + ( I - \phat_k \, M_k \, M_k^T) \, \deltavec^{(k-1)}  \\
&= \phat_k \, M_k \, \text{err}_k + ( I - \phat_k \, M_k \, M_k^T) \, \deltavec^{(k-1)}  && \phat_k = \frac{\xhat_k}{\xhat_k + \shat_k} \\
\end{align*}

Observe that $\deltavec^{(0)} =  \cvec - \cvec = \zero$. Thus, by unrolling the recurrence relation above, we have:
\begin{align*}
\vvec(\pvechat) - \vvectil = \deltavec^{(m)} = \sum_{k=1}^m \phat_k \, M_k \, \text{err}_k \, \prod_{i=k+1}^{m} ( I - \phat_i \, M_i \, M_i^T)
\end{align*}

Note that since $I - \phat_i \, M_i \, M_i^T$ is a $n \times n$ doubly-stochastic matrix, $\prod_{i=1}^{k-1} ( I - \phat_i \, M_i \, M_i^T)$ is also a $n \times n$ doubly-stochastic matrix, and consequently, its Frobenius norm is at most $\sqrt{n}$. Thus we have:
\begin{align}
\| \vvec(\pvechat) - \vvectil \|_2 = \| \deltavec^{(m)} \|_2 
& \leq \sum_{k=1}^m | \phat_k |  \,  \| M_k \|_2 \, | \text{err}_k |  \, \sqrt{n} 
\leq \sqrt{2 \, n} \, \sum_{k=1}^m  | \text{err}_k | \nonumber \\
& = \sqrt{2 \, n} \, \| \text{\textbf{err}} \|_1 && \textbf{err}:=(\text{err}_1,\ldots,\text{err}_m) \nonumber \\
& \leq \sqrt{2 \, n \, m} \, \| \text{\textbf{err}} \|_2 
\label{eq:close_vp_vtil}
\end{align}

Observe that we can bound the 2-norm of the vector \textbf{err} as follows:
\begin{align}
\| \text{\textbf{err}} \|_2 
&= \| -A \xvechat - \svechat + \bvec \|_2 \nonumber \\
&= \| A (\xvec - \xvechat ) +(\svec - \svechat) \|_2 && \bvec = A \xvec + \svec \nonumber \\
&\leq \| A \|_F \, \| \xvec - \xvechat \|_2 + \| \svec - \svechat \|_2 \nonumber \\
&\leq \| M^T M + I \|_F \, \epsilon + \epsilon \nonumber \\
&\leq \left( \| M^T \|_2 \, \| M \|_2 + \| I \|_2 \right) \, \epsilon + \epsilon \nonumber \\
&= ( 2 n + \sqrt{n} + 1) \, \epsilon 
\label{eq:err_bound}
\end{align}

Therefore, if we perform Algorithm \ref{alg:prediction} with inputs $\xvechat$ and $\svechat$, combining the inequalities (\ref{eq:close_v_vtil}), (\ref{eq:close_vp_vtil}), and (\ref{eq:err_bound}), the generated mean vector $\vvec(\pvechat)$ can be shown to be close to the mean vector $\vvec$ associated with the exact projection:
\begin{align*}
\| \vvec - \vvec(\pvechat) \|_2 
&\leq \| \vvec - \vvectil \|_2  + \| \vvectil - \vvec(\pvechat) \|_2  \\
&\leq (\sqrt{2 \, n}) \, \epsilon  + \sqrt{2 \, n \, m} \, ( 2 n + \sqrt{n} + 1) \, \epsilon  \\
&= \sqrt{2 \, n} \, ( 1  + \sqrt{ m} \, ( 2 n + \sqrt{n} + 1) ) \, \epsilon 
\end{align*}

Now we can compute the total expected loss using approximate projection:
\begin{align*}
\left| \sum_{t=1}^T \vvec^{t-1}(\pvechat) \cdot \vec{\ell}^t \right| 
&= \left| \sum_{t=1}^T \left( \vvec^{t-1} + (\vvec^{t-1} -  \vvec^{t-1}(\pvechat)) \right) \cdot \vec{\ell}^t \right| \\
&= \left| \sum_{t=1}^T \vvec^{t-1} \cdot \vec{\ell}^t + \sum_{t=1}^T (\vvec^{t-1} -  \vvec^{t-1}(\pvechat)) \cdot \vec{\ell}^t \right| \\
&\leq \left| \sum_{t=1}^T \vvec^{t-1} \cdot \vec{\ell}^t \right| + \left| \sum_{t=1}^T (\vvec^{t-1} -  \vvec^{t-1}(\pvechat)) \cdot \vec{\ell}^t \right| \\
&\leq \left| \sum_{t=1}^T \vvec^{t-1} \cdot \vec{\ell}^t \right| +  \sum_{t=1}^T  \| \vvec^{t-1} -  \vvec^{t-1}(\pvechat) \|_2 \, \| \vec{\ell}^t \|_2\\
&\leq \left| \sum_{t=1}^T \vvec^{t-1} \cdot \vec{\ell}^t \right| 
+  T \,  \left( \sqrt{2 \, n} \, ( 1  + \sqrt{ m} \, ( 2 n + \sqrt{n} + 1) ) \, \epsilon  \right) \, \sqrt{n}
\end{align*}

For Huffman trees, the linear maps introduced in \cite{pashkovich2012extended} have this property that 
$\| F(\avec) - F(\avec') \|_2 \leq \| \avec - \avec' \|_2 $ for all vectors $\avec$ and $\avec'$ where $F(\cdot)$ is the linear map.
Using this property, it is straightforward to observe that this analysis can be extended for Huffman trees in which 
these linear maps are also used along with the reflection relations.

Setting $\epsilon = \frac{1}{(\sqrt{2} \, n) \, ( 1  + \sqrt{ m} \, ( 2 n + \sqrt{n} + 1) ) \, T}$, we add at most one unit to the expected cumulative loss with exact projections.\\

\end{document}